\documentclass[journal]{IEEEtran}
\usepackage[bibstyle=ieee,citestyle=numeric-comp,sortcites=true,maxnames=4,minnames=3,maxbibnames=3]{biblatex}

\usepackage{amsfonts}
\usepackage[cmex10]{amsmath}
\usepackage{amsthm}
\usepackage{pgfplots}
\usepackage{tikz}
\usetikzlibrary{arrows.meta, positioning, fit}

\makeatletter
\g@addto@macro\normalsize{%
  \setlength\abovedisplayskip{2pt}%
  \setlength\belowdisplayskip{2pt}%
  \setlength\abovedisplayshortskip{2pt}%
  \setlength\belowdisplayshortskip{2pt}%
}
\makeatother

\theoremstyle{definition}
\newtheorem{definition}{Definition}
\newtheorem{assumption}{Assumption}
\newtheorem{remark}{Remark}
\newtheorem{theorem}{Theorem}

\theoremstyle{remark}

\usepackage{braket}
\usepackage{booktabs}

\usepackage{array}
\usepackage{makecell}
\usepackage{mdwmath}
\usepackage{mathrsfs}
\usepackage{mdwtab}
\usepackage{eqparbox}
\usepackage[caption=1]{caption}
\usepackage{stfloats}
\usepackage{url}
\usepackage{amssymb}
\usepackage{float}
\usepackage{indentfirst}
\usepackage{makeidx}
\usepackage{tabularx}
\usepackage{cases}
\usepackage{bbm}
\usepackage{algorithmic}
\usepackage{mathtools}
\usepackage{xcolor}
\usepackage[normalem]{ulem}  
\newcommand{\new}[1]{#1}            
\newcommand{\del}[1]{}  
\usepackage{bm}
\usepackage{stfloats}
\usepackage{lipsum}
\usepackage{amsfonts}
\usepackage{bm}
\usepackage{dsfont}
\usepackage{lipsum}
\usepackage{graphicx}
\usepackage{subcaption}
\usepackage{url}
\usepackage{bbm}
\newcommand{\RNum}[1]{\uppercase\expandafter{\romannumeral #1\relax}}
\usepackage{multirow}
\allowdisplaybreaks
\makeatletter

\usepackage[linesnumbered,ruled,vlined]{algorithm2e}
\usepackage{amsmath}
\makeatother

\usepackage{amsthm}
\theoremstyle{definition} 
\theoremstyle{remark} 
\pgfplotsset{compat=1.18}
\begin{document}
\title{FedFit: Federated Fine-Tuning of LLMs via Vector-Bank Parameterization and Quantization}

\author{
Hang~Zou,
Chao~Zhang,
Yuzhi~Yang,
Yu~Tian,
Samson~Lasaulce,
and~M\'erouane~Debbah

\thanks{H. Zou, Y. Yang, Y. Tian and M. Debbah are with 6G Research Center, Khalifa University, 127788 Abu Dhabi, UAE (e-mails: \{hang.zou, yuzhi.yang, yu.tian, merouane.debbah\}@ku.ac.ae).
}
\thanks{C. Zhang (corresponding author) is with Department of Computer Science, Central South University, 410083, Changsha, China (email: chao.zhang@csu.edu.cn)}
\thanks{S. Lasaulce is with Universit\'{e} de Lorraine, CNRS, CRAN, F-54000 Nancy, France (email: samson.lasaulce@univ-lorraine.fr).}}

\maketitle
\begin{abstract}
Federated Learning (FL) enables privacy-preserving fine-tuning of Large Language Models (LLMs), yet the massive communication overhead remains a critical bottleneck. Furthermore, applying Low-Rank Adaptation (LoRA) in FL faces a fundamental ``aggregation dilemma" between the accurate Sum-of-Products (SoP) and the communication-efficient Product-of-Sums (PoS) implementations. To tackle these challenges, we propose \textbf{FedFit}. First, to significantly reduce communication overhead, we introduce a \textit{disjoint shared vector-bank parameterization} that reconstructs high-dimensional adapter matrices from two compact and disjoint global vector banks. Second, to address the aggregation dilemma, we devise an \textit{alternating optimization schedule}. By cycling between decoupled single-bank updates (which allow for accurate aggregation) and joint updates corrected by a Residual Spectral Aggregation mechanism, we resolve the conflict between SoP and PoS. Additionally, we integrate blockwise quantization with client-side error feedback to further compress the transmitted vectors. Furthermore, we establish theoretical convergence guarantees for the proposed algorithm. Extensive experiments on Qwen2.5 models demonstrate that FedFit achieves perplexity performance comparable to standard federated LoRA methods, while providing compression ratios up to $100\times$ higher.
\end{abstract}

\begin{IEEEkeywords} Large language models, distributed learning, federated learning, low-rank adaptation, fine-tuning, model quantization.
\end{IEEEkeywords}
\section{Introduction}

\subsection{Motivation}

Large language models (LLMs) have achieved remarkable success across a wide range of natural language processing tasks~\cite{brown2020language}, and fine-tuning LLMs for downstream tasks has become standard practice across many application domains. However, in privacy- or security-sensitive areas such as healthcare and finance, centralized fine-tuning is often infeasible because it requires uploading raw user data to a central server~\cite{kairouz2021advances}. In addition, in remote Internet-of-Things (IoT) or mobile scenarios, transmitting local data to the cloud and then broadcasting updated models back to devices can consume excessive wireless resources, which is particularly problematic in power- or bandwidth-limited environments~\cite{chen2021communication,xiao2024otafl,wang2024fedftwireless}. Federated learning (FL) addresses these issues by training models collaboratively across distributed clients using their local data, while only communicating local model updates to a central server for aggregation~\cite{kairouz2021advances, mcmahan2017communication}. Applying FL to LLM fine-tuning, often referred to as FedLLM, has attracted increasing attention~\cite{yao2024fedllm,su2024haflq,cheng2024towardfedllm,Wu2025Survey,zhang2023fedgpt,Xu2023FwdLLM}, yet the extremely large model size of LLMs makes full-parameter FedLLM prohibitively expensive in both computation and communication~\cite{qin2024fulltuning}.

Low-rank adaptation (LoRA), which consists in perturbing the weight matrix by a low-rank product, is a representative parameter-efficient fine-tuning (PEFT) method that can substantially reduce local computation, storage, and communication costs when fine-tuning LLMs~\cite{houlsby2019adapter,hu2022lora}. Instead of updating the full weight matrix of each layer, LoRA introduces a low-rank adapter of the form $\Delta W = BA^{\top}$, where $A$ and $B$ are two low-rank matrices~\cite{hu2022lora}. While this design works well in centralized fine-tuning, directly deploying LoRA in FedLLM raises a fundamental aggregation dilemma~\cite{bai2024flexlora,wang2024flora,singhal2024fedexlora,ghiasvand2025fedtt,liu2025ecolora}. In FedLLM, client $k$ learns its own adapters $(A_k,B_k)$ and uploads them to the server. The server must then combine these client-side updates into a single global adapter. If the server averages the products $\Delta W_k = B_k A_k^\top$ across clients (Sum-of-Product, SoP), the resulting update generally loses the exact low-rank structure and cannot be represented by a single pair of low-rank matrices, thereby increasing the downlink communication cost. Alternatively, if the server first averages $A_k$ and $B_k$ separately and then multiplies the two averaged matrices (Product-of-Sum, PoS), the LoRA format is preserved, but the aggregation becomes inaccurate because all cross-client cross terms are discarded. Recent works observed that these two aggregation rules can lead to different performance in LoRA-based FL~\cite{bai2024flexlora,wang2024flora,singhal2024fedexlora,liu2025ecolora,zhu2024quantlora,hu2024fedqlora}, but a systematic investigation or theoretical analysis of the SoP vs PoS discrepancy for LoRA-based FedLLM, especially under vector-bank parameterization and aggressive quantization are missing.

Although LoRA significantly reduces the computation and communication cost of FedLLM relative to full-parameter fine-tuning, LoRA adapters are still highly redundant across layers and clients~\cite{buyukakyuz2024olora,li2024vblora,li2025unilora}. Recent studies show that LoRA weights can be represented in a shared latent vector space with sparse indexing and low-rank structure, enabling further compression~\cite{li2024vblora,li2025unilora}. However, these vector-bank LoRA techniques have so far been studied only in centralized or single-client settings. To the best of our knowledge, there is currently no work that applies such vector-bank LoRA representations to a decentralized setting such as FedLLM.

Communication efficiency remains a critical bottleneck in realistic federated systems~\cite{chen2021communication,alistarh2017qsgd,reisizadeh2020fedpaq}. While LoRA already reduces the number of parameters to be transmitted, repeatedly communicating dense adapter matrices or large latent banks can still dominate the cost in wireless or edge environments~\cite{wang2024fedftwireless,ghiasvand2025fedtt,liu2025ecolora}. Quantization has also been widely adopted in FL to alleviate communication overhead~\cite{alistarh2017qsgd,reisizadeh2020fedpaq,karimireddy2019errorfeedback,li2023fedef}, and some recent methods have started to explore quantized LoRA in federated settings~\cite{zhu2024quantlora,hu2024fedqlora}. Nevertheless, there is still no work that investigates quantized vector-bank LoRA in FedLLM, where both the latent bank structure and the SoP/PoS aggregation trade-off must be handled jointly.
To make the bandwidth pressure concrete: even after LoRA has reduced the trainable parameter count by two orders of magnitude relative to full fine-tuning, a single per-round upload of a $\sim$15~MB adapter still takes roughly 24~seconds over a 5~Mbps cellular uplink, and proportionally saturates the link in cross-silo deployments with hundreds or thousands of participating clients. Communication, not the local compute imposed by quantized weights, therefore remains the binding constraint in the wireless and cross-organization FedLLM regimes targeted by this work, motivating further compression on top of LoRA.

These observations motivate the work in this paper: we design a FedLLM framework that (i) resolves the SoP–PoS aggregation dilemma, (ii) shares vector-bank parameterization across layers and clients, and (iii) supports aggressive quantization for ultra-low communication cost.

\subsection{Related Works}

\textbf{LLMs and Federated Learning:} Transformers and LLMs have become the dominant backbone for modern AI systems \cite{brown2020language}.
FL provides a standard paradigm for privacy-preserving distributed training, enabling multiple clients to collaboratively train a global model without sharing raw data \cite{kairouz2021advances,chen2021communication,mcmahan2017communication}.
Recent surveys and position papers further discuss the opportunities and challenges of combining FL with LLMs, including system heterogeneity, communication bottlenecks and privacy requirements \cite{xiao2024otafl,yao2024fedllm,cheng2024towardfedllm,wu2025surveyfedft}.
A rapidly growing body of work studies federated fine-tuning of foundation models and LLMs (often called FedLLMs), such as wireless-network oriented fine-tuning of foundation models \cite{wang2024fedftwireless}, federated instruction tuning \cite{zhang2023fedgpt}, mobile-device FwdLLM \cite{Xu2023FwdLLM}, and full-parameter tuning under tight communication constraints \cite{qin2024fulltuning}.

\textbf{Parameter-Efficient and Federated LoRA for LLMs:} PEFT methods, such as adapter modules and LoRA, greatly reduce the number of trainable parameters needed to adapt LLMs \cite{houlsby2019adapter,hu2022lora}.
Many LoRA variants further improve efficiency, expressiveness or robustness, including orthonormal LoRA \cite{buyukakyuz2024olora}, vector-bank \cite{li2024vblora} and Uni-LoRA \cite{li2025unilora}, and FFA-LoRA \cite{sun2024ffalora} for privacy-preserving FL. In the federated setting, several works integrate LoRA into FL for resource-constrained and heterogeneous clients.
Flex-LoRA \cite{bai2024flexlora} proposes a flexible aggregation scheme for clients with heterogeneous computation and data, while FLoRA \cite{wang2024flora} stacks heterogeneous low-rank adapters to improve aggregation robustness.
FedIT \cite{zhang2023fedgpt} and FwdLLM \cite{Xu2023FwdLLM} focus on instruction tuning and billion-parameter models in realistic systems.
More recently, FedTT \cite{ghiasvand2025fedtt} and EcoLoRA \cite{liu2025ecolora} introduce tensorized and energy-aware federated fine-tuning of LLMs. A unified convergence analysis of aggregation–broadcast operators for LoRA-based FL is given in \cite{chen2025aggregationlora}, including a comparison of Sum-of-Products (SoP) and Product-of-Sums (PoS) aggregation. However, \cite{chen2025aggregationlora} does not consider shared vector-bank parameterization or communication-constrained FedLLM settings with quantized updates.
Another line of work studies exact or quantization-aware aggregation for LoRA in FL, including FedEx-LoRA \cite{singhal2024fedexlora}, Quantized LoRA \cite{zhu2024quantlora}, and FedQLoRA \cite{hu2024fedqlora}.
Our method falls into this FedLoRA family, but targets extremely communication-constrained settings and couples LoRA with quantization based encoding of local adapters.

A closely related line mitigates the SoP--PoS bias by alternating optimization or server-side correction of the LoRA factors. \new{RoLoRA~\cite{chen2024rolora} alternates the adapters across rounds, freezing a globally synchronized factor in each round so that averaging the other factor is exact at the product level, with convergence guarantees.} LoRA-FAIR~\cite{bian2025lorafair} introduces a closed-form server-side residual correction for both aggregation bias and initialization drift. LA-LoRA~\cite{liu2026rethinkinglora} alternates $A$ and $B$ at every local step and applies a low-pass Gaussian filter to gradients. These methods operate on raw LoRA adapters.

\textbf{Communication-efficient and Quantized Federated Learning:} Communication-efficient FL has been widely explored via gradient quantization, sparsification and error-compensation.
Representative methods include QSGD \cite{alistarh2017qsgd}, FedPAQ \cite{reisizadeh2020fedpaq}, TernGrad \cite{wen2017terngrad}, signSGD \cite{bernstein2018signsgd} and its majority-vote \cite{bernstein2019signsgd_mv} and error-feedback variants \cite{karimireddy2019errorfeedback}, as well as a recent analysis of error feedback with biased compression in FL \cite{li2023fedef}.
Other communication-efficient schemes compress model updates using binary neural networks or (quantized) compressed sensing techniques \cite{yang2021flbnn,oh2021qcsfl,zheng2024fedcstq,liu2022fedcs}.
Product quantization (PQ) \cite{jegou2011pq} and optimized PQ \cite{ge2013opq} are powerful vector quantizers widely used in approximate nearest-neighbour search, and are natural tools for compressing high-dimensional model updates.
Compared with these works, we focus on quantization and aggregation of LoRA adapters rather than full-precision gradients, and design a PQ-based encoding scheme tailored to federated LoRA for LLMs.

\subsection{Contributions}
Our main contributions are summarized as follows:
\begin{itemize}
     \item We propose the first federated LLM framework utilizing \textit{shared vector-bank parameterization}, which uses disjoint global vector banks to independently generate the low-rank adapter matrices $A$ and $B$.
    \item To resolve the aggregation dilemma, we introduce an \textit{alternating update schedule} where ``A/B-only'' phases ensure perfect aggregation with halved transmission, while ``A+B'' phases utilize a \textit{residual spectral correction} to mitigate the gap between SoP and PoS aggregation. We also provide a comprehensive theoretical analysis of the proposed alternating optimization schedule.
    \item We apply \textit{post-training quantization} to the vector banks to further reduce communication, with client-side \textit{error feedback} compensating for the degradation.
    \item Experiments on Qwen2.5 models demonstrate that our method achieves better performance compared to baselines while reducing communication costs to approximately 1--3\% of standard federated LoRA methods for LLMs.
\end{itemize}

\section{System Model}
\label{sec:system_model}

We consider a wireless FL system consisting of a central server and a set of $K$ distributed clients, indexed by $k \in \mathcal{K} = \{1, \dots, K\}$. The learning process is assumed to take place over $T\geq 1$ communication rounds between the server and the clients.  Each client $k$ possesses a private local set of labeled data $\mathcal{D}_k = \{(\mathbf{x}_i, \mathbf{y}_i)\}_{i=1}^{|\mathcal{D}_k|}$. The objective is to collaboratively fine-tune a pre-trained LLM, parameterized by a weight matrix $\mathcal{W}$, to minimize a global objective function:
\begin{equation}
    \min_{\mathcal{W}} F(\mathcal{W}) \triangleq \sum_{k=1}^K \lambda_k f_k(\mathcal{W}),\label{eq:target}
\end{equation}
where $\lambda_k \ge 0$ is the aggregation weight for client $k$ (typically $\lambda_k = |\mathcal{D}_k| / \sum_{j} |\mathcal{D}_j|$), and $f_k(\mathcal{W}) \triangleq \mathbb{E}_{(\mathbf{x}, \mathbf{y}) \sim \mathcal{D}_k}[\mathcal{L}(\mathcal{W}; \mathbf{x}, \mathbf{y})]$ represents the local empirical risk.

Fine-tuning\footnote{Fine-tuning refers to the training process after pre-training, where a pre-trained model is further adapted to a downstream task or domain using task-specific data.} full LLM parameters in an FL setting is often infeasible due to extreme communication and computation costs. We adopt LoRA, where the pre-trained weight matrix $W_{\ell}^0 \in \mathbb{R}^{d^{\text{out}}_\ell \times d^{\text{in}}_\ell}$ of layer $\ell\in\{1,\dots,L\}$ remains frozen. The update $\Delta W_{\ell}$ is the product of two low-rank matrices:
\begin{equation}
    W_{\ell} = W_{\ell}^0 + \Delta W_{\ell} = W_{\ell}^0 + \frac{\alpha}{r} B_{\ell} A_{\ell}^\top,
    \label{eq:lora_update}
\end{equation}
where $B_{\ell} \in \mathbb{R}^{d^{\text{out}}_\ell \times r}$ and $A_{\ell} \in \mathbb{R}^{d^{\text{in}}_\ell \times r}$ are trainable adapter matrices with rank $r \ll \min(d^{\text{out}}_\ell, d^{\text{in}}_\ell)$, and $\alpha$ is a scaling constant. In the federated setting, each client $k$ maintains its own local adapters $\{A_{\ell,k}, B_{\ell,k}\}_{\ell}$.

\section{Proposed FedFit Framework}
\label{sec:proposed_scheme}

While the standard LoRA formulation in Section \ref{sec:system_model} significantly reduces computational demands compared to full fine-tuning, its deployment in wireless FL faces two critical bottlenecks. First, the communication overhead remains prohibitive for bandwidth-constrained edge networks. Second, the product structure of low-rank matrices induces an aggregation discrepancy under conventional averaging (see Section \ref{sec:aggregation}). To address these challenges, we propose \textbf{FedFit} (see Fig.~1), a communication-efficient framework comprising two integrated components: a disjoint shared vector-bank parameterization, and an alternating optimization schedule.

\begin{figure}[t!]
    \centering
    \includegraphics[width=1.0\linewidth]{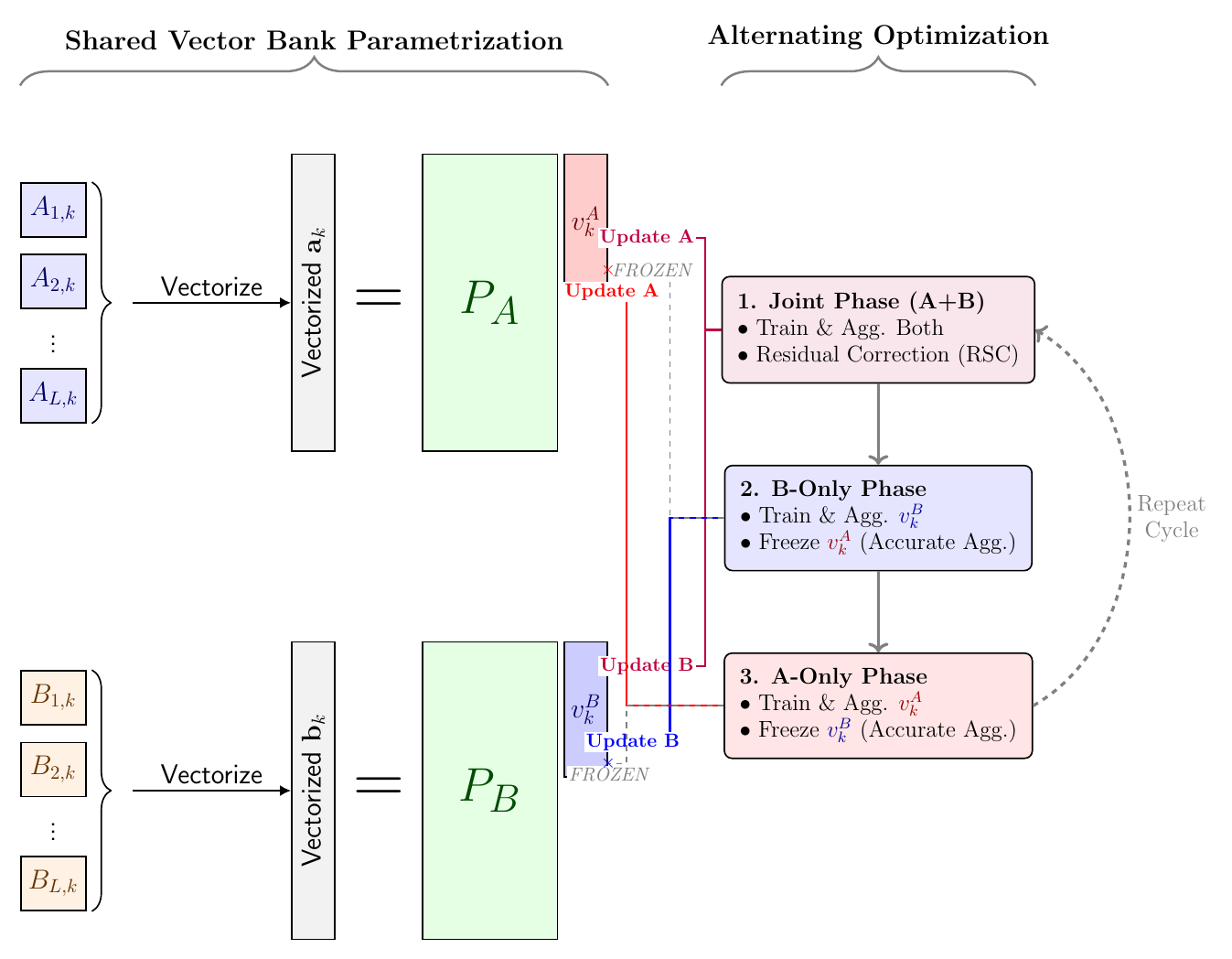}
    \caption{Illustration of client $k$ in FedFit framework}
    \label{fig:framework}
\end{figure}

\subsection{Disjoint Shared Vector-Bank Parameterization}
\label{subsec:vector_Bank}
Although standard LoRA significantly reduces the number of trainable parameters compared to full fine-tuning, the communication overhead remains prohibitive for FL over bandwidth-constrained wireless networks. For instance, fine-tuning a 7B model (e.g., Llama-2) with LoRA still requires transmitting approximately 100-160 million parameters per round. Inspired by recent LoRA variants such as Uni-LoRA~\cite{li2025unilora}, which show that adapter weights are highly redundant, we leverage the insight that sharing parameters across layers' adapters can largely reduce computational and communication complexity while preserving performance.

To achieve this in a federated setting while solving the aggregation dilemma, we introduce the \textit{Disjoint Vector Bank} method. We reformulate the high-dimensional LoRA parameter space by projecting it from two disjoint, low-dimensional global vector banks, $\mathbf{v}^A \in \mathbb{R}^{d_A}$ and $\mathbf{v}^B \in \mathbb{R}^{d_B}$, where the subspace dimensions $d_A, d_B$ are orders of magnitude smaller than the original parameter space.

Let $\text{vec}(A_1, \dots, A_L)$ and $\text{vec}(B_1, \dots, B_L)$ denote the flattened concatenations of all $A$ and $B$ adapter matrices across the $L$ layers. The global reconstruction relationship is:
\begin{equation}
    \text{vec}(A_1, \dots, A_L) = P^A \mathbf{v}^A, \quad \text{vec}(B_1, \dots, B_L) = P^B \mathbf{v}^B.
    \label{eq:global_proj}
\end{equation}
Here, $P^A$ and $P^B$ are fixed, sparse projection matrices constructed to be \textit{isometric} (distance-preserving), i.e. $(P^A)^{\top}P^A=I$, $(P^B)^{\top}P^B=I$. This ensures that optimization in the reduced subspace defined by $\mathbf{v}^A$ and $\mathbf{v}^B$ does not distort the geometry of the original loss landscape.

To facilitate layer-wise operations, we decompose the global projection matrices into layer-specific blocks. We define $P_A$ as the vertical concatenation of layer-specific projection sub-matrices:
\begin{equation}
    P^A = \begin{bmatrix} P^A_1 \\ \vdots \\ P^A_L \end{bmatrix}, \quad P^B = \begin{bmatrix} P^B_1 \\ \vdots \\ P^B_L \end{bmatrix}.
\end{equation}
Consequently, the explicit expression for the adapter parameters of a specific layer $\ell$ is given by:
\begin{equation}
    \text{vec}(A_\ell) = P^A_\ell \mathbf{v}^A, \quad \text{vec}(B_\ell) = P^B_\ell \mathbf{v}^B.
    \label{eq:layer_proj}
\end{equation}
Under this formulation, the learnable parameters transmitted between client and server are strictly $\mathbf{v}^A$ and $\mathbf{v}^B$. The matrices $P^A_\ell$ and $P^B_\ell$ serve as fixed ``gather'' operators shared by all clients, effectively allowing the small number of parameters in the vector banks to be shared globally across all network layers. We use a sparse, random projection as in~\cite{li2025unilora}: 1) Each row of $P_A$ has exactly one nonzero entry; 2) The column index is sampled uniformly from $\{1,\dots,d_A\}$ for adapter matrix $A$; 3) Column-wise normalization is applied.
Formally, we have
\[
(P^A)_{i,j} =
\begin{cases}
\frac{1}{\sqrt{n_j}}, & \text{if row $i$ maps to column $j$}, \\
0, & \text{otherwise},
\end{cases}
\]
where $n_j$ is the number of rows assigned to column $j$. Thus, all LoRA parameters are grouped into $d_A$ shared scalars. Same strategy applies to projection matrix \(P^B\). Because the shared projections are fixed and seeded identically across clients, a client-specific data shift can only perturb the two banks. This acts as a structural regularizer against drift: no single client can unilaterally move the projections even under non-IID scenarios. Compute-heterogeneous clients can train on a strict sub-bank the server zero-pads before aggregation.

\subsection{Alternating Optimization Strategy}
\label{subsec:alt_opt}

Directly optimizing both banks simultaneously induces a bilinear coupling in the weight updates. This coupling not only doubles the communication overhead but also leads to aggregation errors when averaging local updates. To mitigate these issues, we introduce a coordinate-descent training schedule governed by a phase indicator $\phi_t \in \{\text{A-only}, \text{B-only}, \text{A+B}\}$. This approach allows us to selectively freeze parameters to achieve exact aggregation and reduced complexity, or update jointly to ensure convergence.

At each round $t\in\{1,\dots,T\}$, the server broadcasts the current global banks $(\mathbf{v}_t^A, \mathbf{v}_t^B)$ to the selected clients. The clients then perform local training updates restricted to the coordinates specified by $\phi_t$:

\begin{itemize}
    \item \textbf{A-only phase}: Clients update only $\mathbf{v}^A$ while freezing $\mathbf{v}^B$.
    \begin{equation}
        \mathbf{v}_{t+1,k}^A = \mathbf{v}_t^A - \eta_t \, \widehat{\nabla}_{\mathbf{v}^A} {f_k}, \quad \mathbf{v}_{t+1,k}^B = \mathbf{v}_t^B.
        \label{eq:A-phase}
    \end{equation}
    Here, $\widehat{\nabla}_{(\cdot)}{f_k}$ denotes the stochastic gradient computed by client $k$ using its local mini-batches. By freezing $\mathbf{v}^B$, we linearize the weight update with respect to $\mathbf{v}^A$. This not only reduces the uplink communication cost by half but also ensures that the aggregation of local updates is mathematically exact, eliminating approximation errors.

    \item \textbf{B-only phase}: Clients update only $\mathbf{v}^B$ while freezing $\mathbf{v}^A$.
    \begin{equation}
        \mathbf{v}_{t+1,k}^A = \mathbf{v}_t^A, \quad \mathbf{v}_{t+1,k}^B = \mathbf{v}_t^B- \eta_t \, \widehat{\nabla}_{\mathbf{v}^B} {f_k}.
        \label{eq:B-phase}
    \end{equation}
    Symmetric to the A-only phase, this step optimizes the $B$ components with exact aggregation and reduced complexity.

    \item \textbf{A+B phase}: Clients update both banks simultaneously.
    \begin{equation}
        \mathbf{v}_{t+1,k}^A = \mathbf{v}_t^A - \eta_t \, \widehat{\nabla}_{\mathbf{v}^A}{f_k},\quad
        \mathbf{v}_{t+1,k}^B = \mathbf{v}_t^B - \eta_t \, \widehat{\nabla}_{\mathbf{v}^B}{f_k}.
        \label{eq:both-phase}
    \end{equation}
    While solo phases offer efficiency and exact aggregation, less parameters are trained in the local client. This joint phase allows the model to traverse the full parameter space and capture the correlations between matrices $A$ and $B$, even though it incurs slightly higher communication costs and potential aggregation errors.
\end{itemize}

To balance exact aggregation (in solo phases) with the need for full-space optimization (in both phases), we employ a lightweight meta-schedule that repeats short \emph{meta-cycles}:
\begin{center}
\texttt{[both] $\rightarrow$ [A(B)-only] $\rightarrow$ [B(A)-only]}
\end{center}

\section{Aggregation approaches}
\label{sec:aggregation}
The alternating optimization schedule introduced in Section \ref{sec:proposed_scheme} is not merely a heuristic for communication costs reduction and training stability, it is also structurally designed to resolve the fundamental conflict arising when aggregating distributed low-rank updates. While standard federated aggregation rules assume linearity in the parameter space, the low-rank factorization in LoRA introduces a bilinear coupling that breaks this assumption. In this section, we formalize this aggregation dilemma, then demonstrate how our decoupled single-bank phases achieve accurate aggregation, and finally present the residual spectral correction mechanism used to rectify aggregation errors during the joint update phases.

\subsection{The Aggregation Dilemma}
A fundamental challenge in Federated LoRA is the aggregation of local updates. The server aims to compute the global update $\Delta W_{\ell}^{\text{global}}$ that best represents the weighted average of client updates. However, the direct sum of rank-$r$ matrices generally results in a high-rank matrix that violates the LoRA structure. This leads to two conflicting aggregation strategies:

\subsubsection{Product of Sums (PoS)}
Standard approaches (e.g., FedIT) average the adapter matrices $A$ and $B$ independently:
\begin{equation}
    \bar{A}_{\ell} = \sum_{k=1}^{K} \lambda_k A_{\ell,k}, \quad \bar{B}_{\ell} = \sum_{k=1}^{K} \lambda_k B_{\ell,k}.
\end{equation}
While efficient, this strategy fails to capture the coupling between $A$ and $B$, leading to an approximation error where generally $(\sum \lambda_k B_k)(\sum \lambda_k A_k)^\top \neq \sum \lambda_k (B_k A_k^\top)$.

\subsubsection{Sum of Products (SoP)}
Alternatively, aggregating the full updates $\Delta W_{\ell}^{\text{SoP}} = \sum_{k=1}^{K} \lambda_k (B_{\ell,k} A_{\ell,k}^\top)$ yields the exact arithmetic mean of the local weight updates but produces a update $\Delta W_{\ell}^{\text{SoP}}$ with rank up to $K \times r$. Re-compressing this high-rank matrix back into rank-$r$ factors (e.g., via singular value decomposition) is computationally prohibitive for large models and introduces truncation errors.
We resolve this dilemma by designing a parameterization and schedule where PoS aggregation is exact or explicitly correctable.

\subsection{Perfect Aggregation in the A/B-only Phase}
\label{subsec:perfect_Agg}

A key advantage of the alternating schedule is that it restores linearity to the aggregation process during the single-bank update phases. We show that during these phases, the computationally efficient PoS aggregation becomes mathematically equivalent to the precise SoP aggregation.

Assume the system is in an \text{A-only phase} at round $t$. By design, the server does not broadcast updates for the vector bank $\mathbf{v}^B$, and clients freeze their local copies. Consequently, all clients share the exact same frozen sub-bank, $\mathbf{v}_{t,k}^B = \mathbf{v}_t^B$ for all $k \in \mathcal{K}$, and do not update it locally. Following the projection defined in \eqref{eq:layer_proj}, the corresponding LoRA $B$-adapter matrices satisfy $B_{\ell,t,k} = B_\ell(\mathbf{v}_t^B) \equiv B_{\ell,t}$ for all $k$, while the trainable $A_{\ell,k}^t$ matrices may differ across clients due to local training on different datasets.

We first examine the SoP aggregation, which represents the ideal global update. Substituting the frozen $B$ matrix, we obtain:
\begin{align}
    \Delta W_{\ell,t}^{\text{SoP}} &= \sum_{k=1}^K \lambda_k \left( B_{\ell,t} (A_{\ell,t,k})^\top \right) 
    = B_{\ell,t} \left( \sum_{k=1}^K \lambda_k (A_{\ell,t,k})^\top \right) \nonumber \\
    &= B_{\ell,t}  \left( \overline{A}_{\ell,t} \right)^\top.
    \label{eq:sop_derive}
\end{align}
Next, we examine the PoS aggregation. Since $B_{\ell,k}^t = B_\ell^t$ is identical for all clients $k$, the aggregated $B$-adapter is:
\begin{equation}
    \overline{B}_{\ell,t} = \sum_{k=1}^K \lambda_k B_{\ell,t,k} = \left( \sum_{k=1}^K \lambda_k \right) B_{\ell,t} = B_{\ell,t},
\end{equation}
where we utilize the standard FedAvg normalization $\sum_{k=1}^K \lambda_k = 1$. The server then constructs the global update by multiplying the averaged adapters:
\begin{equation}
    \Delta W_{\ell,t}^{\text{PoS}} = \overline{B}_{\ell,t} \left( \overline{A}_{\ell,t} \right)^\top = B_{\ell,t} \left( \overline{A}_{\ell,t} \right)^\top.
\end{equation}
Comparing this result with \eqref{eq:sop_derive}, we observe that:
\begin{equation}
    \Delta W_{\ell,t}^{\text{SoP}} = \Delta W_{\ell,t}^{\text{PoS}}, \quad \forall \ell.
    \label{eq:exact_equality}
\end{equation}
Thus, in A-only phase, the communication-efficient PoS strategy incurs zero aggregation error. By symmetry, the same equality holds for the \text{B-only phase} where matrix $A$ is frozen. This implies that for the majority of the training rounds (the solo phases), FedFit avoids the low-rank approximation errors inherent to standard Federated LoRA.

\subsection{Aggregation Strategy in the A+B phase: From Separate Averaging to Spectral Approximation}

Based on the disjoint vector bank structure defined previously, the adapter matrices $A_{\ell,k}$ and $B_{\ell,k}$ for layer $l$ and client $k$ are derived from the client-specific vector banks $\mathbf{v}_k^A \in \mathbb{R}^{d_A}$ and $\mathbf{v}_k^B \in \mathbb{R}^{d_B}$ via projection matrices. By partitioning the rows of the projection matrix $P_\ell^A$ into $r$ blocks $\{P_{\ell,j}^A\}_{j=1}^r$, and similarly for $P_\ell^B$, we can express the full weight update $\Delta W_{\ell,k} = B_{\ell,k} A_{\ell,k}^\top$ as a sum of rank-1 components,
\begin{equation}
\Delta W_{\ell,k} = \sum_{j=1}^r \left( P_{\ell,j}^B \mathbf{v}_k^B \right) \left( \mathbf{v}_k^A \right)^\top \left( P_{\ell,j}^A \right)^\top.
\end{equation}

Ideally, the server aims to find aggregated global vector banks $\mathbf{v}_\text{agg}^A$ and $\mathbf{v}_\text{agg}^B$ such that the reconstructed global update perfectly matches the average of the clients' local updates, denoted as $\Delta W_{\ell, \text{agg}} = \sum_{k=1}^K \lambda_k\Delta W_{\ell,k}$. However, exact reconstruction is generally impossible because the sum of $K$ rank-1 products cannot be losslessly compressed into a single rank-1 structure for each component $j$. Formally, the inequality
\begin{equation}
\label{eq:inequality}
\frac{1}{K}\sum_{k=1}^K \left( P_{\ell,j}^B \mathbf{v}_k^B \right) \left( \mathbf{v}_k^A \right)^\top \left( P_{\ell,j}^A \right)^\top \neq \left( P_{\ell,j}^B \mathbf{v}_\text{agg}^B \right) \left( \mathbf{v}_\text{agg}^A \right)^\top \left( P_{\ell,j}^A \right)^\top
\end{equation}
holds because the left-hand side represents the true SoP aggregation, which generally possesses a higher rank (at most $K$) within the subspace defined by the projections, whereas the right-hand side restricts the global model to a rank-1 representation. This algebraic constraint forces a choice between two divergent aggregation strategies.

The first strategy focuses on computational simplicity by aggregating the factors $A$ and $B$ separately. Similar as FedAvg, we define the global adapters as the average of local adapters, $\bar{A}_\ell = \sum_k \lambda_kA_{\ell,k}$ and $\bar{B}_\ell = \sum_k \lambda_kB_{\ell,k}$. Due to the linearity of the projection operation where $\text{vec}(A_{\ell,k}) = P_\ell^A \mathbf{v}_k^A$, this adapter-level averaging is mathematically equivalent to averaging the underlying vector banks directly:
\begin{equation}
\mathbf{v}_\text{agg}^A = \sum_{k=1}^K \lambda_k\mathbf{v}_k^A, \quad \mathbf{v}_\text{agg}^B = \sum_{k=1}^K \lambda_k \mathbf{v}_k^B.
\end{equation}
While this method is computationally efficient, it ignores the cross-correlations between $\mathbf{v}_k^A$ and $\mathbf{v}_k^B$.

To capture the coupled structure of the updates, an alternative strategy revisits the goal of approximating the sum of products. Although exact reconstruction is impossible via the layer-wise projections, we can seek $\mathbf{v}_\text{agg}^A$ and $\mathbf{v}_\text{agg}^B$ that minimize the reconstruction error across all layers. The formal optimization objective can be expressed as
{\small\begin{equation}
\min_{\mathbf{v}_\text{agg}^A, \mathbf{v}_\text{agg}^B} \sum_{\ell} \left\| \sum_{j=1}^r P_{\ell,j}^A \left( \sum_{k=1}^K \lambda_k \mathbf{v}_k^A (\mathbf{v}_k^B)^{\top} - \mathbf{v}_\text{agg}^A (\mathbf{v}_\text{agg}^B)^\top \right) (P_{\ell,j}^B)^\top \right\|_F^2.
\end{equation}}

Solving this weighted low-rank approximation problem is intractable due to the layer-specific projection matrices $P^A_{\ell,j}$ and $P^B_{\ell,j}$. We therefore propose a relaxation by removing the projection constraints and minimizing the error directly in the vector bank space. The relaxed problem becomes finding the best rank-1 approximation of the correlation sum
\begin{equation}
\min_{\mathbf{v}_\text{agg}^A, \mathbf{v}_\text{agg}^B} \left\| \sum_{k=1}^K \lambda_k \mathbf{v}_k^A (\mathbf{v}_k^B)^{\top} - \mathbf{v}_\text{agg}^A (\mathbf{v}_\text{agg}^B)^\top \right\|_F^2.
\end{equation}

Even though mathematically this spectral decomposition optimally mitigates the product approximation error, relying solely on this solution poses significant risks to federated training. The obtained rank-1 vectors $\mathbf{v}_\text{agg}^A$ and $\mathbf{v}_\text{agg}^B$, which can be derived from the principal singular vectors of $M=\sum_{k=1}^K \lambda_k\, \mathbf{v}_k^A (\mathbf{v}_k^B)^\top$ can be arbitrarily far from the local client vectors $\mathbf{v}_k^A$ and $\mathbf{v}_k^B$ in the parameter space. Such large deviations destroy the training stability and can lead to divergence when clients resume local training from the aggregated model.

To resolve this conflict, we propose a \textbf{Residual Spectral Correction (RSC)} approach that exploits the advantages of both strategies. We introduce the FedAvg solution $(\bar{\mathbf{v}}^A, \bar{\mathbf{v}}^B)$ as the stable base vectors and add a weighted compensation term to mitigate the product degradation brought by naive averaging. We define the global aggregated banks as:
\begin{equation}
    \mathbf{v}_{\text{agg}}^A = \bar{\mathbf{v}}^A + \gamma \mathbf{r}^A, \quad \mathbf{v}_{\text{agg}}^B = \bar{\mathbf{v}}^B + \gamma \mathbf{r}^B,
\end{equation}
where $\gamma$ is a scalar gain. The correction terms $\mathbf{r}^A, \mathbf{r}^B$ approximate the \textit{residual} product error that FedAvg misses.

To rigorously define the correction terms, we first check the structure of the residual errors. Let $\delta \mathbf{v}_k^A = \mathbf{v}_k^A - \bar{\mathbf{v}}^A$ and $\delta \mathbf{v}_k^B = \mathbf{v}_k^B - \bar{\mathbf{v}}^B$ denote the centered local deviations from the FedAvg anchor. We define the residual matrix $R$ as the weighted sum of the outer products of these deviations:
\begin{equation}
    R \triangleq \sum_{k=1}^K \lambda_k \delta \mathbf{v}_k^A(\delta \mathbf{v}_k^B)^\top = \sum_{k=1}^K \lambda_k (\mathbf{v}_k^A - \bar{\mathbf{v}}^A) (\mathbf{v}_k^B - \bar{\mathbf{v}}^B)^\top.
    \label{eq:residual_def}
\end{equation}
Expanding the term $(\mathbf{v}_k^A - \bar{\mathbf{v}}^A)(\mathbf{v}_k^B - \bar{\mathbf{v}}^B)^\top$ and utilizing the linearity of the summation $\sum \lambda_k (\cdot)$, we establish the relationship between $R$ and the correlation sum $M$:
\begin{align}
    R &= \underbrace{\sum_{k=1}^K \lambda_k \mathbf{v}_k^A (\mathbf{v}_k^B)^\top}_{M} 
       - \underbrace{\left(\sum_{k=1}^K \lambda_k \mathbf{v}_k^A\right)}_{\bar{v}^A} (\bar{\mathbf{v}}^B)^\top \nonumber \\
      &\quad - \bar{\mathbf{v}}^A \underbrace{\left(\sum_{k=1}^K \lambda_k \mathbf{v}_k^B\right)^\top}_{(\bar{\mathbf{v}}^B)^\top} 
       + \underbrace{\left(\sum_{k=1}^K \lambda_k\right)}_{1} \bar{\mathbf{v}}^A (\bar{\mathbf{v}}^B)^\top \nonumber \\
      &= M - \bar{\mathbf{v}}^A (\bar{\mathbf{v}}^B)^\top - \bar{\mathbf{v}}^A (\bar{\mathbf{v}}^B)^\top + \bar{\mathbf{v}}^A (\bar{\mathbf{v}}^B)^\top \nonumber \\
      &= M - \bar{\mathbf{v}}^A (\bar{\mathbf{v}}^B)^\top.
\end{align}
This derivation confirms that $R$ captures exactly the ``missing'' product information, namely, the difference between the true sum of products $M$ and the product of sums provided by FedAvg. Consequently, we obtain $\mathbf{r}^A, \mathbf{r}^B$ by solving the best rank-1 approximation of $R$:
\begin{equation}
    (\mathbf{r}^A, \mathbf{r}^B) = \arg\min_{\mathbf{x},\mathbf{y}} \| R - \mathbf{x} \mathbf{y}^\top \|_F^2.
\end{equation}
By targeting the residual matrix $R$, the correction terms explicitly restore the coupled information lost during averaging, while the FedAvg anchor ensures the global model remains within a stable region of the parameter space.

Directly computing the singular value decomposition (SVD) of $R$ is memory-prohibitive due to the dimension $d_A,d_B \approx 10^6$. However, since $R$ is a weighted sum of $K$ rank-1 terms, its rank is at most $K \ll d_A,d_B$. We utilize a linear-time projected QR decomposition to solve this efficiently. We stack the weighted centered residuals ${\delta} \mathbf{v}_k^A = \sqrt{\lambda_k}(\mathbf{v}_k^A - \bar{\mathbf{v}}^A)$ into tall matrices $\Delta V_A \in \mathbb{R}^{d_A \times K}$ (similarly for $B$). We compute thin QR decompositions $\Delta V_A = Q_A R_A$ and form the small core matrix $C_\delta = R_A R_B^{\top} \in \mathbb{R}^{K \times K}$. The principal singular vectors of $C_\delta$ are then lifted back to $\mathbb{R}^{d_A+d_B}$ to obtain $\mathbf{r}^A$ and $\mathbf{r}^B$ with linear complexity $\mathcal{O}((d_A+d_B)K^2)$. More details can be found in Algo.~1.

\begin{algorithm}[t]
\caption{FedAvg Aggregation with Residual Spectral Correction }
\label{alg:residual_spectral_Agg}
\begin{algorithmic}[1]
\STATE \textbf{Input:} Client vector banks $\{(\mathbf{v}_k^A, \mathbf{v}_k^B, \lambda_k)\}_{k=1}^K$ with $\sum_k \lambda_k = 1$;
residual gain $\gamma \in [0,1]$\new{; gate threshold $\tau = 0.3$}.
\STATE \textbf{Output:} Aggregated banks $\mathbf{v}_{\mathrm{agg}}^A, \mathbf{v}_{\mathrm{agg}}^B$.
\vspace{0.5em}
\STATE \textbf{// Step 1: FedAvg baseline in bank space}
\STATE $ \bar{\mathbf{v}}^A \leftarrow \sum_{k=1}^K \lambda_k \mathbf{v}_k^A,\quad
        \bar{\mathbf{v}}^B \leftarrow \sum_{k=1}^K \lambda_k v_k^B.$
\STATE \textbf{// Step 2: Centered residuals in bank space}
\FOR{$k = 1,\dots,K$}
  \STATE $\delta \mathbf{v}_k^A \leftarrow \mathbf{v}_k^A - \bar{\mathbf{v}}^A, \quad \delta \mathbf{v}_k^B \leftarrow \mathbf{v}_k^B - \bar{\mathbf{v}}^B;$
\ENDFOR
\STATE Stack weighted residuals into tall matrices:
\STATE \quad $ \Delta V_A \leftarrow [\sqrt{\lambda_1}\,\delta \mathbf{v}_1^A,\dots,\sqrt{\lambda_K}\,\delta \mathbf{v}_K^A] \in \mathbb{R}^{d_A \times K}$,
\STATE \quad $ \Delta V_B \leftarrow [\sqrt{\lambda_1}\,\delta \mathbf{v}_1^B,\dots,\sqrt{\lambda_K}\,\delta \mathbf{v}_K^B] \in \mathbb{R}^{d_B \times K}$.
\STATE \textbf{// Step 3: Compute residual correction}
\STATE $[Q_A, R_A] \leftarrow \mathrm{qr}(\Delta V_A)$ \quad \textit{(thin QR; $Q_A \in \mathbb{R}^{d_A \times K}$)}
\STATE $[Q_B, R_B] \leftarrow \mathrm{qr}(\Delta V_B)$ \quad \textit{(thin QR; $Q_B \in \mathbb{R}^{d_B \times K}$)}
\STATE Form small core matrix:
\STATE \quad $\mathcal{C}_{\delta} \leftarrow R_A R_B^\top \in \mathbb{R}^{K \times K}$.
\STATE Compute SVD of the core:
\STATE \quad $[U_{\mathcal{C}}, \Sigma_{\mathcal{C}}, V_{\mathcal{C}}] \leftarrow \mathrm{svd}(\mathcal{C}_{\delta})$.
\STATE Extract singular values and top singular vectors:
\STATE \quad $\sigma_1 \leftarrow \Sigma_{\mathcal{C}}[0,0]$,
\STATE \quad $\mathbf{u}_1 \leftarrow U_{\mathcal{C}}[:,0]$, $\boldsymbol{\nu}_1 \leftarrow V_{\mathcal{C}}[:,0]$.
\STATE \new{\textbf{if} $\rho_1 \triangleq \sigma_1^2 \big/ \sum_{i} \Sigma_{\mathcal{C}}[i,i]^2 < \tau$ \textbf{then return} $\bar{\mathbf{v}}^A, \bar{\mathbf{v}}^B$}
\STATE Scale alignment w.r.t. \(\bar{\mathbf{v}}^A\) and \(\bar{\mathbf{v}}^B\):
\STATE \quad $\alpha \leftarrow \sqrt{\sigma_1 \, \|\bar{\mathbf{v}}^A \| / \|\bar{\mathbf{v}}^B \|}$.
\STATE Define residual correction vectors:
\STATE \quad $\mathbf{r}^A \leftarrow \alpha \,Q_A\mathbf{u}_1, \quad r^B \leftarrow (\sigma_1 / \alpha)\,  Q_B\boldsymbol{\nu}_1;$
\vspace{0.5em}
\STATE \textbf{// Step 4: Combine FedAvg with residual correction}
\STATE $ \mathbf{v}_{\mathrm{agg}}^A \leftarrow \bar{\mathbf{v}}^A + \gamma\, \mathbf{r}^A, \quad \mathbf{v}_{\mathrm{agg}}^B \leftarrow \bar{\mathbf{v}}^B + \gamma\, \mathbf{r}^B; $
\STATE \textbf{return} $\mathbf{v}_{\mathrm{agg}}^A, \mathbf{v}_{\mathrm{agg}}^B$
\end{algorithmic}
\end{algorithm}

\section{Vector Bank Quantization and Communication Cost Analysis}
\subsection{Blockwise Round-To-Nearest (RTN) quantizer}

By projecting the heavy adapter weight matrices into two disjoint vector banks, we can further quantize them before sending them. This is not merely a sequential pipeline of two independent compression stages: the vector-bank parameterization is what makes aggressive low-bit quantization effective, following the classical \emph{transform-coding} pattern of a decorrelating transform followed by a scalar quantizer, e.g., in JPEG and H.264/HEVC. Raw LoRA adapters have heterogeneous per-layer statistics that a uniform blockwise quantizer handles poorly; the bank instead represents the entire transmitted state as a single $1$-D vector whose seeded-projection entries are homogeneous and near-Gaussian, so block-wise RTN with one per-block scale is the natural quantizer. The bank parameterization thus makes quantization \emph{cheap} rather than \emph{lossy}, justifying the two mechanisms as a single co-design rather than an X+Y stack.
Without loss of generality, we consider the joint vector bank $\mathbf{v} = [\mathbf{v}^A; \mathbf{v}^B] \in \mathbb{R}^{d_A + d_B}$ while the communication cost is reduced in half during solo phase. We partition it into $G$ contiguous blocks of length $D$ (assuming $d_A + d_B = G D$ for simplicity):
\begin{align}
\mathbf{v} = \big[\mathbf{v}_{1}^{\top},\dots,\mathbf{v}_{G}^{\top}\big]^{\top}, 
\quad \mathbf{v}_{i} \in \mathbb{R}^{D}, \ i = 1,\dots,G. 
\end{align}

For a given bitwidth $b$ and block size $D$, define the symmetric integer dynamic range \(C(b) \triangleq 2^{b-1} - 1\). On each block $\mathbf{v}_{i}$ we compute a scale
\begin{align}
s_{i}(\mathbf{v}) \triangleq \max_{1 \le j \le D} |\mathbf{v}_{i,j}| + \varepsilon,
\end{align}
with a small $\varepsilon > 0$ to avoid division by zero, and define the blockwise gain \(\alpha_{i}(\mathbf{v}; b) \triangleq \frac{C(b)}{s_{i}(\mathbf{v})}\). The RTN encoder maps $\mathbf{v}$ to integer codes and per-block scales:
\begin{align}
c_{i,j} 
&= \operatorname{clip}\Big( \operatorname{round}\big( \alpha_{i}(\mathbf{v};b)\, \mathbf{v}_{i,j} \big), \, -C(b), \, C(b) \Big), \nonumber \\
s_{i} &= s_{i}(\mathbf{v}),
\end{align}
for $i = 1,\dots,G$ and $j = 1,\dots,D$.
Collecting all $c_{i,j}$ into a tensor $c \in \mathbb{Z}^{L}$ and all scales into
$s = (s_{1},\dots,s_{G})$, we write the quantized representation as $Q(\mathbf{v}; b,D) = \big(c(\mathbf{v}; b,D), s(\mathbf{v}; b,D)\big)$.

The decoder reconstructs $\widehat{\mathbf{v}}$ blockwise as
\begin{align}
\widehat{\mathbf{v}}_{i,j} 
&= \frac{1}{\alpha_{i}(\mathbf{v};b)}\, c_{i,j}
= \frac{s_{i}(\mathbf{v})}{C(b)}\, c_{i,j},
\end{align}
which defines the dequantizer \( \widehat{\mathbf{v}} = \tilde Q\big(Q(\mathbf{v}; b,D)\big)\).

\subsubsection{Decoupled uplink and downlink quantizers}

We allow different quantization configurations for uplink, $(b_{\mathrm{up}}, D_{\mathrm{up}})$, and downlink, $(b_{\mathrm{down}}, D_{\mathrm{down}})$, with possibly different bitwidths and block sizes.

\textbf{Uplink (client $\rightarrow$ server)}: Client $k$ encodes the bank difference $\mathbf{y}_{t,k} = \mathbf{v}_{t,k} - \mathbf{v}_{t-1}$ using $\big(c_{t,k}, s_{t,k}\big) = Q\big(\mathbf{y}_{t,k}; b_{\mathrm{up}}, D_{\mathrm{up}}\big)$ and sends them to the server, which reconstructs $\widehat{\mathbf{y}}_{t,k} = \tilde Q\big(c_{t,k}, s_{t,k}; b_{\mathrm{up}}, D_{\mathrm{up}}\big)$.

\textbf{Downlink (server $\rightarrow$ client)}: After aggregation, the server updates the global bank to $\mathbf{v}_{t+1}$ and may send either: the full bank $\mathbf{v}_{t+1}$, or a downlink vector bank difference $\Delta \mathbf{v}_{t+1} \triangleq \mathbf{v}_{t+1} - \mathbf{v}_{t}$. For concreteness, we suppose the server transmits the difference $\Delta  \mathbf{v}_{t+1}$ using
\begin{equation}
(c_{\mathrm{t+1,down}}, s_{t+1,\mathrm{down}})
= Q\big(\Delta \mathbf{v}_{t+1}; b_{\mathrm{down}}, D_{\mathrm{down}}\big),  
\end{equation}
and each client reconstructs
\begin{equation}
  \Delta \widehat{\mathbf{v}}_{t+1}
= \tilde Q\big(c_{t+1,\mathrm{down}}, s_{t+1,\mathrm{down}}; b_{\mathrm{down}}, D_{\mathrm{down}}\big),
\end{equation}
then updates its local bank via $\mathbf{v}_{t+1,k} = \mathbf{v}_{t,k} + \Delta \widehat{\mathbf{v}}_{t+1}$.

Note that the uplink and downlink quantizers use \emph{independent} parameters $(b_{\mathrm{up}},D_{\mathrm{up}})$ and $(b_{\mathrm{down}},D_{\mathrm{down}})$,
allowing asymmetric compression (e.g., more aggressive compression on uplink, less on downlink).

\subsubsection{Client-side error feedback on the uplink}

To compensate for uplink quantization error, each client $k$ maintains a persistent error accumulator \( \mathbf{e}_{t,k} \in \mathbb{R}^{d_A+d_B},\) initialized as $\mathbf{e}_{0,k} = 0$. At round $t$, instead of quantizing $\mathbf{y}_{t,k}$ directly, the client quantizes
the \emph{error-corrected} quantity $\mathbf{z}_{t,k} \triangleq \mathbf{y}_{t,k} + \mathbf{e}_{t,k}$.
The error-feedback uplink encoder is then
\begin{align}
\big(c_{t,k}, s_{t,k}\big)
&= Q\big(\mathbf{e}_{t,k}; b_{\mathrm{up}}, D_{\mathrm{up}}\big), \nonumber \\
\widehat{\mathbf{e}}_{t,k}
&= \tilde Q\big(c_{t,k}, s_{t,k}; b_{\mathrm{up}}, D_{\mathrm{up}}\big),
\end{align}
with $\widehat{\mathbf{z}}_{t,k}$ reconstructed on the server and interpreted as the quantized update $\widehat{\mathbf{y}}_{t,k} \approx \mathbf{y}_{t,k}$. The client updates its error accumulator after the round:
\begin{equation}
\mathbf{e}_{t+1,k} 
\triangleq \mathbf{z}_{t,k} - \widehat{\mathbf{z}}_{t,k}
= \big(\mathbf{y}_{t,k} + \mathbf{e}_{t,k}\big) - \widehat{\mathbf{y}}_{t,k}.
\end{equation}

Intuitively, $\mathbf{e}_{t+1,k}$ stores the residual quantization error that was \emph{not} transmitted at round $t$.
At the next round, this residual is added back to the new update $\mathbf{y}_{t+1,k}$ before quantization,
which ensures that the long-term sum of transmitted vectors better tracks the sum of exact updates. On the server side, aggregation proceeds with the \emph{decoded} updates $\widehat{\mathbf{y}}_{t,k}$:
\begin{equation}
\mathbf{v}_{t+1}=\mathbf{v}_{t} + \sum_{k=1}^{N} \lambda_{k} \widehat{\mathbf{y}}_{t,k},
\end{equation}
where $\lambda_{k}$ are the FedAvg weights. Error feedback is applied only on the client side; downlink quantization uses its own parameters
$(b_{\mathrm{down}},D_{\mathrm{down}})$ and needs no error accumulator.

\subsection{Compression Ratio and Communication Cost Analysis}

We now quantify the compression ratio of the RTN quantizer applied to a vector bank $\mathbf{v} \in \mathbb{R}^{d_A + d_B}$, assuming a
full-precision baseline of 16-bit brain floating point (BF16) representation. In BF16, the communication cost of transmitting the entire vector $\mathbf{v}$ is \(B_{\text{bf16}} = 16 (d_A+d_B) \ \text{bits}\). We assume that integer codes $c_{i,j}$ are stored using exactly $b$ bits each, each scale $s_i$ is stored in BF16 (16 bits). 

We define the (per-vector) compression ratio $\rho_\text{quant}(b,D)$ as the fraction of the BF16 baseline relative to bits required by RTN:
\begin{align}
\rho_\text{quant}(b,D)
&\triangleq \frac{B_{\text{BF16}}}{B_{\text{RTN}} (b,D)}
= \frac{ 16(d_A+d_B)}{(d_A+d_B)\big(b + \frac{16}{D}\big)} \nonumber\\[0.2em]
&= \frac{16}{\big(b + \frac{16}{D}\big)} \approx \frac{16}{b}.
\end{align}
Thus, for a given bitwidth $b$ and block size $D$, the RTN scheme uses a fraction
$\frac{b}{16}$ of the BF16 bandwidth. For instance, with $b=4$ bits and block size $D = 128$,
\begin{align}
\rho_\text{quant}(4,128)
&= \frac{16}{4+ 16/128}
\approx 3.87
\end{align}
Hence the 4-bit RTN-compressed representation costs about $26\%$ of an BF16 vector, yielding roughly $4\times$ reduction in communication overhead. Before ending this section, two remarks are provided on the communication cost reduction on non-quantization part of proposed method:

\begin{remark}[Vector-bank vs.\ standard LoRA size]
In standard LoRA, each adapted linear layer carries its own pair of dense factors, so the total number of adapter parameters
scales linearly with the number of layers and the
hidden dimension. In our disjoint vector bank design, \emph{all} these factors are generated from two shared vector banks $\mathbf{v}^A \in \mathbb{R}^{d_A}$ and $\mathbf{v}^B \in \mathbb{R}^{d_B}$, with total trainable size $d_A + d_B$ significantly smaller than the total number of weights in adapters. Each element of the vector bank is reused across many adapter entries via the projection matrices $P^A, P^B$ and can be viewed as a representative (or “cluster center’’) for a group of similar values. This leads to a structural compression before any quantization is applied.
\end{remark}

\begin{remark}[Training part scheduler of vector bank]
In our meta-round schedule, each period consists of $m_B$ ``both'' rounds (full $\mathbf{v}^A$ and $\mathbf{v}^B$ sent) followed by
$m_s$ A-only and $m_s$ B-only rounds (one of the two banks). The
resulting meta-round compression ratio is
\begin{equation}\label{eq:rho-scheduler}
\rho_\text{scheduler}
=
\frac{m_B + 2m_s}{m_B +  m_s} \leq 2,
\end{equation}
assuming equal bank length for A and B.
For example, choosing $m_B = 1$ and $m_s = 2$ yields
$\rho_\text{scheduler} = 5/ 3$, i.e.\ about
$40\%$ reduction in adapter communication on top of the vector-bank
parameter quantization.
\end{remark}

\section{Theoretical Analysis}

\new{Having specified the parameterization, the alternating schedule and the quantized uplink and downlink, we now analyze the optimization behavior they induce: how the bank representation reshapes the objective geometry, and whether the alternating updates converge. The analysis assumes exact gradients and full-precision communication. Minibatch sampling and quantization lie outside the proof: under bounded variance and distortion they add error terms that enlarge the convergence neighborhood without changing its qualitative behavior.}

\subsection{Effect of Vector Banks in Convexity}
Here, we discuss the convergence rate of the proposed FL method. We first recall the federated-learning objective in \eqref{eq:target}. From the overview of the LLM, we can reformulate the federated objective as
\begin{equation}
f(\{\Delta W_\ell\}_{\ell=1}^{L})=\sum_{k=1}^N \lambda_k f_k(\{B_{\ell,k}A^\top_{\ell,k}\}_{\ell=1}^L).
\end{equation}
We first make the following definitions and assumptions.

\begin{definition}[\textbf{$\beta$-smoothness}]\label{subsec:smoothness}
For $\beta \geq 0$, a differentiable function $f$ is called $\beta$-smooth if
\begin{equation}
\left\| \nabla f(x) - \nabla f(y) \right\| \leq \beta \left\| x - y \right\|, \quad\forall x,y.
\end{equation}
\end{definition}

\begin{assumption}\label{ass2}
The function $\Delta W_\ell\rightarrow f(\{\Delta W_\ell\}_{\ell=1}^L)$ is $\beta$-smooth.
\end{assumption}

Assumption \ref{ass2} bounds the non-convexity of the LoRA optimization function. Although LLM is highly non-convex, the weak convexity with a relatively small $\mu$ is still easy to satisfy since LoRA is used only for finetuning. From Assumption \ref{ass2}, we have the following theorem, showing the effect of the UniLoRA-like parameter updating scheme adopted in this paper.
\begin{theorem}\label{lem:1}{
Given Assumption \ref{ass2} and \eqref{eq:layer_proj}, suppose the full-space objective is $\beta$-smooth and its gradient is bounded by $G$. Then the induced objective $f(\mathbf v^A,\mathbf v^B)$ is almost block-wise smooth. Specifically, the functions
$\mathbf v^A\mapsto f(\mathbf v^A,\mathbf v^B)$ and
$\mathbf v^B\mapsto f(\mathbf v^A,\mathbf v^B)$ are almost $\beta_A$- and $\beta_B$-smooth, respectively, where
\begin{equation}
\beta_A=\alpha^2\beta r^{-2}
\frac{\max_\ell d^\mathrm{in}_\ell}{\sum_{\ell=1}^L d_\ell^\mathrm{in}},\quad
\beta_B=\alpha^2\beta r^{-2}
\frac{\max_\ell d^\mathrm{out}_\ell}{\sum_{\ell=1}^L d_\ell^\mathrm{out}}.
\end{equation}
Moreover, the induced objective satisfies the following mixed-smoothness property:
\begin{equation}
\begin{aligned}
&\|\nabla_{\mathbf v^B} f(\mathbf v^{A\prime},\mathbf v^B)
-
\nabla_{\mathbf v^B} f(\mathbf v^A,\mathbf v^B)\|_2
\le
\beta_{\rm mix}\|\mathbf v^{A\prime}-\mathbf v^A\|_2,\\
&\|\nabla_{\mathbf v^A} f(\mathbf v^A,\mathbf v^{B\prime})
-
\nabla_{\mathbf v^A} f(\mathbf v^A,\mathbf v^B)\|_2
\le
\beta_{\rm mix}\|\mathbf v^{B\prime}-\mathbf v^B\|_2,
\end{aligned}
\end{equation}
where
\begin{equation}
\beta_{\rm mix}
\lesssim
\frac{\alpha G}{r}
+
\sqrt{\beta_A\beta_B}.
\end{equation}
Here, the word ``almost'' comes from the scaling-law approximation under the random projection in \eqref{eq:layer_proj}.}
\end{theorem}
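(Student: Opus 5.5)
The plan is to compute the gradient and Hessian-type operators of the induced objective by the chain rule through the map $(\mathbf v^A,\mathbf v^B)\mapsto \{\Delta W_\ell\}_\ell$, with $\Delta W_\ell=\frac{\alpha}{r}B_\ell A_\ell^\top$, $\mathrm{vec}(A_\ell)=P^A_\ell\mathbf v^A$ and $\mathrm{vec}(B_\ell)=P^B_\ell\mathbf v^B$ from \eqref{eq:layer_proj}. Write $G_\ell=\nabla_{\Delta W_\ell} f$. Then
\[
\nabla_{\mathbf v^A} f=\tfrac{\alpha}{r}\sum_\ell (P^A_\ell)^\top \mathrm{vec}(G_\ell^\top B_\ell),\qquad
\nabla_{\mathbf v^B} f=\tfrac{\alpha}{r}\sum_\ell (P^B_\ell)^\top \mathrm{vec}(G_\ell A_\ell).
\]

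\textbf{Block smoothness in $\mathbf v^A$.} Fix $\mathbf v^B$. The map $\mathbf v^A\mapsto\{\Delta W_\ell\}$ is then linear, namely $\mathcal L_A:\mathbf v^A\mapsto\{\frac{\alpha}{r}B_\ell (\mathrm{mat}(P^A_\ell\mathbf v^A))^\top\}$. Assumption~\ref{ass2} therefore gives block smoothness with constant $\beta\|\mathcal L_A\|_{\mathrm{op}}^2$, and the task reduces to bounding $\|\mathcal L_A\|_{\mathrm{op}}^2$.

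Here I would invoke the random-projection ``scaling law'', which is where the word ``almost'' in Theorem~\ref{lem:1} enters. Each bank coordinate is replicated into roughly $n_j\approx (r\sum_\ell d^{\mathrm{in}}_\ell)/d_A$ entries with weight $1/\sqrt{n_j}$. The subblock of $P^A$ feeding layer $\ell$ receives, in expectation, a fraction $d^{\mathrm{in}}_\ell/\sum_{\ell'}d^{\mathrm{in}}_{\ell'}$ of each column's mass, so $\|P^A_\ell\|_{\mathrm{op}}^2\approx d^{\mathrm{in}}_\ell/\sum_{\ell'} d^{\mathrm{in}}_{\ell'}$. I then need to combine this with the $B_\ell$ factor. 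Under the normalization implicit in the stated constants (an $r^{-2}$ factor and no explicit $\|B\|$), I would argue heuristically that the row/column averaging over the $r$ rank components contributes the extra $r^{-1}$ beyond the $\alpha^2/r^2$ prefactor. Taking the maximum over layers then yields $\beta_A=\alpha^2\beta r^{-2}\max_\ell d^{\mathrm{in}}_\ell/\sum_\ell d^{\mathrm{in}}_\ell$. The case of $\mathbf v^B$ is symmetric and gives $\beta_B$ with $d^{\mathrm{out}}$.

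\textbf{Mixed smoothness.} I would split the change in $\nabla_{\mathbf v^B} f$ when $\mathbf v^A\to\mathbf v^{A\prime}$ into two terms:
\[
\tfrac{\alpha}{r}\sum_\ell (P^B_\ell)^\top\mathrm{vec}\big(G_\ell(A'_\ell-A_\ell)\big)
+\tfrac{\alpha}{r}\sum_\ell (P^B_\ell)^\top\mathrm{vec}\big((G'_\ell-G_\ell)A_\ell\big).
\]
The first term is bounded using $\|G\|\le G$ and the isometry $(P^A)^\top P^A=I$, $(P^B)^\top P^B=I$, giving $\frac{\alpha G}{r}\|\mathbf v^{A\prime}-\mathbf v^A\|$. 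For the second term, $G'-G$ is controlled by $\beta\|\mathcal L_A(\mathbf v^{A\prime}-\mathbf v^A)\|$, and the outer map is $\mathcal L_B^\top$. This produces $\beta\|\mathcal L_A\|\|\mathcal L_B\|=\sqrt{\beta_A\beta_B}$ times the displacement. The same argument with the roles of A and B swapped gives the second inequality.

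\textbf{Main obstacle.} The hard step is the operator-norm estimate of $\mathcal L_A$. It depends on $\|B_\ell\|$, which is not bounded by the stated hypotheses, and on concentration of the random sparse projection. I would first try to prove the $P^A_\ell$ bound exactly in expectation and then treat it as the ``almost'' approximation. The dependence on $\|B_\ell\|$ I would absorb into the constants via a bounded-iterate normalization, reflecting the $\lesssim$ in the statement.
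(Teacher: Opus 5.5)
Your overall architecture matches the paper's. You freeze one bank, write the map to $\{\Delta W_\ell\}$ as a linear operator, and get block smoothness $\beta\|\mathcal L_A\|_{\mathrm{op}}^2$ by composition. You then split the mixed term into a bounded-gradient part ($\alpha G/r$, via $\|P^A\|_{\mathrm{op}}=\|P^B\|_{\mathrm{op}}=1$) and a smoothness part ($\beta\|\mathcal L_A\|_{\mathrm{op}}\|\mathcal L_B\|_{\mathrm{op}}=\sqrt{\beta_A\beta_B}$). One small slip: in that split the first term should carry $G'_\ell$, or the second $A'_\ell$. As written you drop $(G'_\ell-G_\ell)(A'_\ell-A_\ell)$, which is harmless for the bounds. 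The real gap is in the one substantive step, the estimate of $\|\mathcal L_A\|_{\mathrm{op}}$.

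You attach the ratio $d^{\mathrm{in}}_\ell/\sum_{\ell'}d^{\mathrm{in}}_{\ell'}$ to $\|P^A_\ell\|_{\mathrm{op}}^2$, take a maximum over layers, and absorb $\|B_\ell\|$ into a constant. That is not a valid bound, because $\mathcal L_A$ writes into all layers at once. With $X_\ell=\mathrm{mat}(P^A_\ell\mathbf x)$,
\[
\|\mathcal L_A\mathbf x\|_2^2=\frac{\alpha^2}{r^2}\sum_\ell\|B_\ell X_\ell^\top\|_F^2\le\frac{\alpha^2}{r^2}\max_\ell\|B_\ell\|_{\mathrm{op}}^2\sum_\ell\|P^A_\ell\mathbf x\|_2^2=\frac{\alpha^2}{r^2}\max_\ell\|B_\ell\|_{\mathrm{op}}^2\,\|\mathbf x\|_2^2,
\]
so the layer fractions of $P^A$ sum back to one. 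If $B_\ell=bU_\ell$ with $U_\ell^\top U_\ell=I_r$ (roughly the case for a random tall $B_\ell$), this holds with equality. Hence, treating $\|B_\ell\|=O(1)$, no dimension ratio appears, and your constant would be too small by a factor of order $L$.

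The paper places the approximation elsewhere. It uses $\|P^B\|_{\mathrm{op}}=1$ exactly, and since $D_A$ is block diagonal, $\|D_A\|_{\mathrm{op}}=\max_\ell\|A_\ell\|_{\mathrm{op}}$ legitimately. The scaling law is applied to the \emph{frozen} factor, $\|A_\ell\|_{\mathrm{op}}^2\le\|A_\ell\|_F^2\approx (rd^{\mathrm{in}}_\ell/\bar d_A)\|\mathbf v^A\|_2^2$, and only the bank norm is suppressed afterwards. So the $\|B_\ell\|$-dependence you planned to absorb is exactly where the ratio comes from. You should absorb only $\|\mathbf v^B\|_2$, after extracting the ratio from $\|B_\ell\|_F^2$.

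This mechanism puts the $d^{\mathrm{in}}$ ratio on the $\mathbf v^B$-block, via $\|A_\ell\|$. Trying to produce $d^{\mathrm{in}}$ in the $\mathbf v^A$-block, to match the statement's superscripts literally, is what pushed you toward $P^A_\ell$.

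Finally, no extra $r^{-1}$ is needed. The prefactor $(\alpha/r)^2$ already supplies $r^{-2}$, and the $r$ in $rd^{\mathrm{in}}_\ell$ cancels against $\bar d_A=r\sum_\ell d^{\mathrm{in}}_\ell$.
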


\begin{proof}
See Appendix I.
\end{proof}
\begin{remark}
By Theorem~\ref{lem:1}, we see that although the vector banks broadly preserve the smoothness and weak convexity properties of the original fine-tuning objective.
However, as $d_A$ and $d_B$ become smaller, the normalization factor in $P^A$ and $P^B$ also become smaller, leading to more intense changes in $\mathbf{v}^A$ and $\mathbf{v}^B$. Thus, the effective smoothness and weak convexity constants of the vector bank objective worsen when $d_A$ and $d_B$ are taken smaller. Since $d_A$ and $d_B$ are directly tied to both computation and communication costs in the system, this induces an explicit trade-off between complexity and performance, which is consistent with both standard intuition and our empirical observations.

On the other hand, this also highlights that, in the systems considered in this paper, the optimization landscape in fact contains a large number of local minima and saddle points. Merely ensuring convergence to some stationary point therefore does not guarantee good performance. In particular, alternating “freeze-and-update’’ schemes (e.g., A-only phase and B-only phase) are especially prone to becoming trapped in such points. This is one of the reasons we prioritize updating $A$ and $B$ jointly whenever possible.
\end{remark}
\subsection{Convergence of the Proposed Method}
\noindent\textbf{\textit{A-Only Phase and B-Only Phase:}}
We first discuss the A-only and B-only phases as defined in Eq. \eqref{eq:A-phase} and Eq. \eqref{eq:B-phase}, respectively. We have the following theorem.
\begin{theorem}\label{the:1}
{Assume that $f$ is lower bounded by $f_\star$.}
If we alternately conduct A-only and B-only phases, the FedFit algorithm can converge to a vector bank $(\mathbf{v}^A, \mathbf{v}^B)$ with $0<\eta_t<2/\max\{\beta_A,\beta_B\}$ such that $\nabla f(\mathbf{v}^A, \mathbf{v}^B)=0$.
{Moreover, after $T$ complete A/B cycles, we have}
\begin{equation}
{
\min_{0\le t<T}\mathcal{G}^{\rm alt}_t
\le
\frac{
f(\mathbf{v}^A_0,\mathbf{v}^B_0)-f_\star
}{
\sum_{t=0}^{T-1}c_t^{\rm alt}
},
}
\end{equation}
{where}
\begin{equation}
{
\mathcal{G}^{\rm alt}_t
=
\|\nabla_{\mathbf{v}^A} f(\mathbf{v}^A_t,\mathbf{v}^B_t)\|_2^2
+
\|\nabla_{\mathbf{v}^B} f(\mathbf{v}^A_{t+1},\mathbf{v}^B_t)\|_2^2,
}
\end{equation}
{and}
\begin{equation}
{
c_t^{\rm alt}
=
\min\left\{
\eta_t\left(1-\frac{\beta_A\eta_t}{2}\right),
\eta_t\left(1-\frac{\beta_B\eta_t}{2}\right)
\right\}>0.
}
\end{equation}
{In particular, if $\eta_t=\eta$ is constant, then}
\begin{equation}
{
\min_{0\le t<T}\mathcal{G}^{\rm alt}_t
\le
\frac{
f(\mathbf{v}^A_0,\mathbf{v}^B_0)-f_\star
}{
T\eta\left(1-\frac{\max\{\beta_A,\beta_B\}\eta}{2}\right)
}
=
\mathcal{O}\left(\frac{1}{T}\right).
}
\end{equation}
\end{theorem}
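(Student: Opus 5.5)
The plan is to treat the alternating A-only/B-only schedule as block coordinate gradient descent on the induced objective $f(\mathbf v^A,\mathbf v^B)$, under exact gradients and full-precision communication as stated at the start of this section. The first step is a reduction. By the perfect-aggregation identity \eqref{eq:exact_equality}, in an A-only round the server's aggregate equals
\[
\mathbf v^A_{t+1}=\mathbf v^A_t-\eta_t\sum_k\lambda_k\nabla_{\mathbf v^A}f_k(\mathbf v^A_t,\mathbf v^B_t)=\mathbf v^A_t-\eta_t\nabla_{\mathbf v^A}f(\mathbf v^A_t,\mathbf v^B_t),
\]
and similarly for the B-only round started from $(\mathbf v^A_{t+1},\mathbf v^B_t)$. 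I take a single local step, or equivalently interpret $\widehat\nabla$ as the full local gradient. One complete cycle is then two exact partial gradient steps.

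Next I apply the descent lemma blockwise, using the block smoothness constants $\beta_A,\beta_B$ from Theorem~\ref{lem:1}. These constants are taken at face value, with the ``almost'' treated as exact.

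For the A step, $\mathbf v^A\mapsto f(\cdot,\mathbf v^B_t)$ is $\beta_A$-smooth, so
\[
f(\mathbf v^A_{t+1},\mathbf v^B_t)\le f(\mathbf v^A_t,\mathbf v^B_t)-\eta_t\Bigl(1-\tfrac{\beta_A\eta_t}{2}\Bigr)\|\nabla_{\mathbf v^A}f(\mathbf v^A_t,\mathbf v^B_t)\|_2^2 .
\]
For the B step, the same argument with $\beta_B$ gives
\[
f(\mathbf v^A_{t+1},\mathbf v^B_{t+1})\le f(\mathbf v^A_{t+1},\mathbf v^B_t)-\eta_t\Bigl(1-\tfrac{\beta_B\eta_t}{2}\Bigr)\|\nabla_{\mathbf v^B}f(\mathbf v^A_{t+1},\mathbf v^B_t)\|_2^2 .
\]
Adding the two inequalities and lower-bounding both coefficients by $c_t^{\rm alt}$ yields
\[
f_{t+1}\le f_t-c_t^{\rm alt}\,\mathcal G^{\rm alt}_t .
\]
The step-size condition $0<\eta_t<2/\max\{\beta_A,\beta_B\}$ is exactly what makes $c_t^{\rm alt}>0$.

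Telescoping over $t=0,\dots,T-1$ and using $f\ge f_\star$ gives $\sum_t c_t^{\rm alt}\mathcal G^{\rm alt}_t\le f(\mathbf v^A_0,\mathbf v^B_0)-f_\star$. Bounding the left side below by $\min_t\mathcal G^{\rm alt}_t\sum_t c_t^{\rm alt}$ gives the stated rate. For constant $\eta$, $c^{\rm alt}=\eta(1-\max\{\beta_A,\beta_B\}\eta/2)$, which gives the $\mathcal O(1/T)$ form.

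For the asymptotic claim, I take $\eta_t$ bounded away from $0$ and from $2/\max\{\beta_A,\beta_B\}$. Then $\sum_t c_t^{\rm alt}=\infty$ while $\sum_t c_t^{\rm alt}\mathcal G_t^{\rm alt}<\infty$, so $\mathcal G^{\rm alt}_t\to 0$. Since $f_t$ is monotone and bounded below, it also converges.

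The main obstacle is converting $\mathcal G^{\rm alt}_t\to0$ into $\nabla f=0$ at a limit point. The difficulty is that $\mathcal G^{\rm alt}_t$ evaluates the B-gradient at the intermediate point $(\mathbf v^A_{t+1},\mathbf v^B_t)$ rather than at $(\mathbf v^A_t,\mathbf v^B_t)$. I would handle this with the mixed-smoothness bound of Theorem~\ref{lem:1}:
\[
\|\nabla_{\mathbf v^B}f(\mathbf v^A_t,\mathbf v^B_t)\|_2\le\|\nabla_{\mathbf v^B}f(\mathbf v^A_{t+1},\mathbf v^B_t)\|_2+\beta_{\rm mix}\eta_t\|\nabla_{\mathbf v^A}f(\mathbf v^A_t,\mathbf v^B_t)\|_2 .
\]
This gives $\|\nabla f(\mathbf v^A_t,\mathbf v^B_t)\|_2^2\le(2+2\beta_{\rm mix}^2\eta_t^2)\mathcal G^{\rm alt}_t\to0$. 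Hence, if the iterates have an accumulation point (for instance under bounded level sets, which I would add as an assumption if needed), continuity of $\nabla f$ gives $\nabla f=0$ there.

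The remaining caveat is multiple local steps or stochastic gradients. I would state these as lying outside the proof, consistent with the section's preamble, rather than attempt to control client drift.
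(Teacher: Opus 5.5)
Your proposal is correct and follows the paper's proof essentially step for step. You reduce to centralized block gradient descent via exact aggregation in the solo phases, apply the descent lemma with $\beta_A$ and $\beta_B$ to the two half-steps, add them and lower-bound the coefficients by $c_t^{\rm alt}$, then telescope against $f_\star$. Your extra treatment of the asymptotic claim fills in a stationarity conclusion that the paper only asserts after telescoping: step sizes bounded away from $0$ and $2/\max\{\beta_A,\beta_B\}$ so that $\mathcal{G}^{\rm alt}_t\to 0$, the mixed-smoothness bound to move the $\mathbf{v}^B$-gradient back to $(\mathbf{v}^A_t,\mathbf{v}^B_t)$, and an accumulation-point assumption.
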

\begin{proof}
Due to the symmetry, we begin with the A-only phase. 
Following basic gradient descent principles, we have the following one-step descent inequality. Since SoP=PoS, we can directly discuss the corresponding centralized algorithm.
\begin{equation}
\begin{aligned}
f(\mathbf{v}^A_{t+1}, \mathbf{v}^B) 
\le &f(\mathbf{v}^A_{t}, \mathbf{v}^B)
+ \langle \nabla_{\mathbf{v}^A} f(\mathbf{v}^A_{t}, \mathbf{v}^B), \mathbf{v}^A_{t+1}-\mathbf{v}^A_{t} \rangle\\
&+ \frac{\beta_A}{2} \|\mathbf{v}^A_{t+1}-\mathbf{v}^A_{t}\|_2^2.
\end{aligned}
\end{equation}
From the updating function Eq. \eqref{eq:A-phase}, by setting $0<\eta_t<2/\beta_A$, we have
\begin{equation}
f(\mathbf{v}^A_{t+1}, \mathbf{v}^B)-f(\mathbf{v}^A_{t}, \mathbf{v}^B)
\le \frac{\beta_A\eta_t-2}{2}\eta_t \|\nabla_{\mathbf{v}^A} f(\mathbf{v}^A_{t}, \mathbf{v}^B)\|_2^2.
\end{equation}
Thus, the A-only gradient updating converges to some point where $\nabla_{\mathbf{v}^A} f(\mathbf{v}^A, \mathbf{v}^B)=0$. Symmetrically, we have the one-step descent inequality for B-only phase by
\begin{equation}
f(\mathbf{v}^A, \mathbf{v}^B_{t+1})-f(\mathbf{v}^A, \mathbf{v}^B_{t})
\le \frac{\beta_B\eta_t-2}{2}\eta_t \|\nabla_{\mathbf{v}^B} f(\mathbf{v}^A, \mathbf{v}_{t}^B)\|_2^2.
\end{equation}
and similarly, the B-only gradient updating converges to some point where $\nabla_{\mathbf{v}^B} f(\mathbf{v}^A, \mathbf{v}^B)=0$.

{
Moreover, combining the descent inequalities of one A-only step and the subsequent B-only step, we have
}
\begin{equation}
{
f(\mathbf{v}^A_{t+1},\mathbf{v}^B_{t+1})
-
f(\mathbf{v}^A_t,\mathbf{v}^B_t)
\le
-c_t^{\rm alt}\mathcal{G}^{\rm alt}_t,
}
\end{equation}
{
where $\mathcal{G}^{\rm alt}_t$ and $c_t^{\rm alt}$ are defined above. Summing this inequality from $t=0$ to $T-1$ gives
}
\begin{equation}
{
\sum_{t=0}^{T-1}c_t^{\rm alt}\mathcal{G}^{\rm alt}_t
\le
f(\mathbf{v}^A_0,\mathbf{v}^B_0)
-
f(\mathbf{v}^A_T,\mathbf{v}^B_T)
\le
f(\mathbf{v}^A_0,\mathbf{v}^B_0)-f_\star.
}
\end{equation}
{
Therefore,
}
\begin{equation}
{
\min_{0\le t<T}\mathcal{G}^{\rm alt}_t
\le
\frac{
f(\mathbf{v}^A_0,\mathbf{v}^B_0)-f_\star
}{
\sum_{t=0}^{T-1}c_t^{\rm alt}
}.
}
\end{equation}
{
When $\eta_t=\eta$, this further yields the rate $\mathcal{O}(1/T)$. Therefore, by iterating between A-only and B-only phases, the FedFit algorithm can converge to some $(\mathbf{v}^A,\mathbf{v}^B)$ such that $\nabla f(\mathbf{v}^A,\mathbf{v}^B)=0$, which finishes the proof.
}
\end{proof}

Theorem \ref{the:1} shows that simply iterating between the two separately updating phases is enough for convergence, which is similar to the conclusion in \cite{wang2025adflora}.

Finally we discuss the A+B phase. The following theorem shows that if we choose some reasonable learning rate $\eta_t$. Here, the difference mainly lies in the bilinear-like LoRA function.

\begin{theorem}\label{the:2}
{
Assume that $f$ is lower bounded by $f_\star$ and satisfies the block-wise smoothness and mixed-smoothness properties in Theorem \ref{lem:1}.
}
If we only conduct the A+B phase, the FedFit algorithm can still converge to a vector bank $(\mathbf{v}^A,\mathbf{v}^B)$ such that $\nabla f(\mathbf{v}^A,\mathbf{v}^B)=0$ with sufficiently small $\eta_t$.
{
Moreover, after $T$ iterations, we have
}
\begin{equation}
{
\min_{0\le t<T}
\|\nabla f(\mathbf v_t^A,\mathbf v_t^B)\|_2^2
\le
\frac{
f(\mathbf v_0^A,\mathbf v_0^B)-f_\star
}{
\sum_{t=0}^{T-1} c_t^{\rm AB}
},
}
\end{equation}
{
where
}
\begin{equation}
{
c_t^{\rm AB}
=
\eta_t\left(1-\frac{(\min\{\beta_A,\beta_B\}+\beta_{\rm mix})\eta_t}{2}\right)>0.
}
\end{equation}
{
In particular, if $\eta_t=\eta$ is constant and
$0<\eta<2/(\max\{\beta_A,\beta_B\}+\beta_{\rm mix})$, then
}
\begin{equation}\footnotesize
{
\min_{0\le t<T}
\|\nabla f(\mathbf v_t^A,\mathbf v_t^B)\|_2^2
\le
\frac{
f(\mathbf v_0^A,\mathbf v_0^B)-f_\star
}{
T\eta\left(1-\frac{(\max\{\beta_A,\beta_B\}+\beta_{\rm mix})\eta}{2}\right)
}
=
\mathcal O\left(\frac{1}{T}\right).
}
\end{equation}
\end{theorem}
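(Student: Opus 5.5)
The plan is a standard descent-lemma argument for joint gradient descent on $\mathbf v=(\mathbf v^A,\mathbf v^B)$. By the reasoning used for Theorem~\ref{the:1}, we first reduce to the centralized recursion. Writing $\mathbf v_{t+1}=\mathbf v_t-\eta_t\nabla f(\mathbf v_t)$, we then establish a full-space smoothness constant for $\nabla f$ from the blockwise and mixed constants of Theorem~\ref{lem:1}. For two points $\mathbf v,\mathbf v'$, we bound each block of the gradient difference by a triangle inequality through an intermediate point:
\begin{equation*}
\|\nabla_{\mathbf v^A} f(\mathbf v')-\nabla_{\mathbf v^A} f(\mathbf v)\|_2\le \beta_A\|\mathbf v^{A\prime}-\mathbf v^A\|_2+\beta_{\rm mix}\|\mathbf v^{B\prime}-\mathbf v^B\|_2,
\end{equation*}
and symmetrically for the $B$ block. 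Collecting both blocks gives the $2\times 2$ matrix $\begin{bmatrix}\beta_A&\beta_{\rm mix}\\ \beta_{\rm mix}&\beta_B\end{bmatrix}$. Its spectral norm is at most $\max\{\beta_A,\beta_B\}+\beta_{\rm mix}$, so $f$ is $L:=\max\{\beta_A,\beta_B\}+\beta_{\rm mix}$-smooth jointly in $\mathbf v$.

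Next comes the descent inequality. Applying the standard quadratic upper bound with constant $L$ to the gradient step gives
\begin{equation*}
f(\mathbf v_{t+1})\le f(\mathbf v_t)-\eta_t\Bigl(1-\tfrac{L\eta_t}{2}\Bigr)\|\nabla f(\mathbf v_t)\|_2^2 .
\end{equation*}
Telescoping from $t=0$ to $T-1$ and using $f\ge f_\star$ gives
\begin{equation*}
\sum_t c_t\|\nabla f(\mathbf v_t)\|_2^2\le f(\mathbf v_0)-f_\star .
\end{equation*}
Lower bounding the left-hand side by $\min_t\|\nabla f(\mathbf v_t)\|_2^2\sum_t c_t$ yields the stated bound. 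For constant $\eta<2/L$ this is $\mathcal O(1/T)$. Because the series $\sum_t c_t\|\nabla f(\mathbf v_t)\|_2^2$ is summable while $\sum_t c_t=\infty$, we also get $\liminf_t\|\nabla f(\mathbf v_t)\|_2=0$, which is the stationarity claim in the sense used in Theorem~\ref{the:1}.

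The main obstacle is the constant that appears in $c_t^{\rm AB}$. The statement writes $\min\{\beta_A,\beta_B\}+\beta_{\rm mix}$ in $c_t^{\rm AB}$, but the argument above naturally produces $\max\{\beta_A,\beta_B\}+\beta_{\rm mix}$. The min version holds only if a sharper cross-term estimate is available, which does not follow from the stated blockwise constants. I would therefore carry out the proof with $L=\max\{\beta_A,\beta_B\}+\beta_{\rm mix}$. This gives $c_t$ no larger than the stated $c_t^{\rm AB}$, and exactly matches the constant-step-size bound at the end of the theorem, which is the substantive claim. I would also note explicitly that the step-size condition $\eta_t<2/L$ is what makes $c_t>0$.

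A secondary point concerns the A+B phase. There the federated aggregation is not exact, since PoS differs from SoP, and Algorithm~\ref{alg:residual_spectral_Agg} perturbs the average. The analysis states exact gradients and one-step local updates. Under that reading, averaging $\mathbf v_t-\eta_t\nabla f_k$ over clients is exactly the centralized gradient step, because the bank averaging is linear. I would set $\gamma=0$, or treat RSC as an additional bounded error that enlarges the neighborhood, as in the paper's caveat about quantization.
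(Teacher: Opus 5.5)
Your proof is correct, but it is organized differently from the paper's. The paper never forms a joint smoothness constant. Instead it follows the path $(\mathbf v_t^A,\mathbf v_t^B)\to(\mathbf v_{t+1}^A,\mathbf v_t^B)\to(\mathbf v_{t+1}^A,\mathbf v_{t+1}^B)$ and applies the blockwise descent lemma on each leg, as in the proof of Theorem~\ref{the:1}. The joint $B$-step uses the stale gradient $\nabla_{\mathbf v^B}f(\mathbf v_t^A,\mathbf v_t^B)$ rather than the gradient at the intermediate point, and the paper pays for this mismatch with the mixed-smoothness constant and Young's inequality. That route has three features:
\begin{itemize}
\item it needs only one of the two mixed-smoothness inequalities;
\item it keeps separate coefficients $\eta_t(1-(\beta_A+\beta_{\rm mix})\eta_t/2)$ and $\eta_t(1-(\beta_B+\beta_{\rm mix})\eta_t/2)$ on the two block gradients;
\item it presents the joint phase as a perturbation of the alternating one.
\end{itemize}
Your route uses both inequalities to show that $\nabla f$ is globally Lipschitz, after which the textbook nonconvex descent argument applies verbatim. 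This is more modular: a variance term or a bounded RSC/quantization error would enter the same one-line descent inequality. It can also be slightly sharper. With the exact spectral norm $\tfrac{\beta_A+\beta_B}{2}+\sqrt{(\tfrac{\beta_A-\beta_B}{2})^2+\beta_{\rm mix}^2}$ of your $2\times 2$ matrix, you get a constant no larger than $\max\{\beta_A,\beta_B\}+\beta_{\rm mix}$.

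Your diagnosis of the $\min$ in $c_t^{\rm AB}$ is right, and you can state it more strongly. The paper's own per-block coefficients, once collapsed to a single multiplier of $\|\nabla f\|_2^2$, also give $\max\{\beta_A,\beta_B\}+\beta_{\rm mix}$. That is what the paper itself writes in the constant-step case.

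The $\min$ version is in fact false. Take $f=\tfrac{\beta_A}{2}\|\mathbf v^A\|_2^2+\tfrac{\beta_B}{2}\|\mathbf v^B\|_2^2$ with $\beta_A<\beta_B$, $\beta_{\rm mix}=0$, $\mathbf v_0^A=0$, $\mathbf v_0^B\neq 0$, $\eta=1/\beta_B$ and $T=1$. The claimed bound reduces to $c_0^{\rm AB}\le 1/(2\beta_B)$. However, $c_0^{\rm AB}=\beta_B^{-1}(1-\beta_A/(2\beta_B))>1/(2\beta_B)$, even though $\eta$ lies in the constant-step range. Proving the $\max$ version is therefore the correct choice, and your bound is tight on this example.

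One small fix: your $\liminf$ conclusion needs $\sum_t c_t=\infty$, which you should state as a condition on $\eta_t$. For constant $\eta$ it holds automatically, and you actually get the stronger conclusion $\|\nabla f(\mathbf v_t)\|_2\to 0$.
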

\begin{proof}
See Appendix \ref{proof:2}.
\end{proof}

\begin{remark}
The above method shows that the A+B method also converges to the same suboptimal set regardless of the fact that SoP$\neq$PoS. However, it requires smaller learning rate and frequent aggregations, which are hard to satisfy in practice. Thus, simply relying on the A+B phase is still likely to cause training failure.
\end{remark}

\section{Numerical Results}

\textbf{Comparison with SOTA FL methods for LLMs:} We start by comparing proposed methods with existing LoRA based FL methods for LLM. We select Qwen2.5-{0.5B/1.5B/3B}-Instruct \cite{qwen2025qwen25technicalreport} as our base models and perform PEFT on them with general-purpose supervised fine-tuning\footnote{ Here ``B" means one billion number of parameters.}. The LoRA rank is set to \(r=4\) with scaling constant \(\alpha =8\) unless specifying explicitly. For proposed method, we set the vector bank length for adapter matrices \(A,B\) to \(128 \times 2048 = 262144\) which is significantly smaller than the number of adapter parameters.   For all experiments, the maximum learning rate is \(5 \times 10^{-5}\) decayed to \(1 \times 10^{-5}\) with a cosine learning rate scheduling. Each FL round consists of \(100\) training steps unless specifying explicitly. All training takes \(T=20\) FL rounds until the decay of evaluation perplexity is negligible. We split the Dolly dataset, a high-quality instructions dataset \cite{DatabricksBlog2023DollyV2}, into 90\% for training and 10\% for evaluation. Unless noted, ablations in this section (Tabs.~V--XIII) use $K = 4$ on Qwen2.5-1.5B-Instruct at the canonical bank dimension $L = 524{,}288$. \new{All runs share one fixed seed and data partition, so comparisons are paired; repeating the $K = 4$ Qwen2.5-1.5B cell at three seeds moves the $7$-task average by $0.10$~pt.}

\begin{table*}[t!]
\centering
\small

\renewcommand{\arraystretch}{1.0}
\begin{tabular}{l l c c c c}
\toprule
Model & Method & Communication Cost & Compression Ratio & Eval Perplexity & Average Score \\
\midrule
\multirow{10}{*}{\text{Qwen2.5-0.5B-IT}}
  & FedIT~\cite{zhang2023fedgpt}            & 4.40M  & 1.00   & 7.42 & 50.48 \\
  & Flex-LoRA~\cite{bai2024flexlora}        & 4.40M  & 1.00   & 7.41 & 50.18 \\
  & FFA-LoRA~\cite{sun2024ffalora}         & 2.20M  & 2.00   & 7.65 & 50.49 \\
  & RoLoRA~\cite{chen2024rolora}         & 2.22M & 2.25 & 7.79 & \textbf{50.78} \\
  & Quantized-LoRA~\cite{zhu2024quantlora} & 1.16M & 4.29 & 7.64 & 50.63 \\
  & LoRA-FAIR~\cite{bian2025lorafair}      & 4.43M & 1.13 & 7.56 & 50.57 \\
  & LA-LoRA~\cite{liu2026rethinkinglora}        & 4.40M & 1.00 & 8.60 & 50.66 \\
\cmidrule(l){2-6}
  & FedFit          & 0.70M  & \underline{6.30}  & \underline{7.30}  & \underline{50.69} \\
  & FedFit (RTN4)   & 0.19M  & \textbf{23.69}    & \textbf{7.29}     & 50.37 \\
\midrule
\multirow{10}{*}{\text{Qwen2.5-1.5B-IT}}
  & FedIT~\cite{zhang2023fedgpt}            & 9.23M   & 1.00            & 5.68             & \underline{60.69} \\
  & Flex-LoRA~\cite{bai2024flexlora}        & 14.97M  & 1.00            & 5.66             & 60.48 \\
  & FFA-LoRA~\cite{sun2024ffalora}         & 7.48M   & 1.00            & 5.81             & 60.23 \\
  & RoLoRA~\cite{chen2024rolora}         & 4.63M & 2.07 & 5.84 & 60.40 \\
  & Quantized-LoRA~\cite{zhu2024quantlora} & 2.38M & 4.03 & 5.74 & 60.48 \\
  & LoRA-FAIR~\cite{bian2025lorafair}      & 9.27M & 1.03 & 5.71 & 60.54 \\
  & LA-LoRA~\cite{liu2026rethinkinglora}        & 9.23M & 1.00 & 6.57 & 59.95 \\
\cmidrule(l){2-6}
  & FedFit          & 0.70M   & \underline{13.21} & \textbf{5.58}   & \textbf{60.84} \\
  & FedFit (RTN4)   & 0.19M   & \textbf{49.72}    & \underline{5.59} & \underline{60.69} \\
\midrule
\multirow{10}{*}{\text{Qwen2.5-3B-IT}}
  & FedIT~\cite{zhang2023fedgpt}            & 14.97M  & 1.00            & 5.10            & 63.63 \\
  & Flex-LoRA~\cite{bai2024flexlora}        & 14.97M  & 1.00            & 5.11            & 63.30 \\
  & FFA-LoRA~\cite{sun2024ffalora}         & 7.48M   & 2.00            & 5.35            & 63.17 \\
  & RoLoRA~\cite{chen2024rolora}         & 7.51M  & 1.99 & 5.37 & 63.36 \\
  & Quantized-LoRA~\cite{zhu2024quantlora} & 3.84M  & 3.90 & 5.19 & 63.42 \\
  & LoRA-FAIR~\cite{bian2025lorafair}      & 15.01M & 1.00 & 5.13 & \underline{63.65} \\
  & LA-LoRA~\cite{liu2026rethinkinglora}        & 14.97M & 1.00 & 5.90 & 62.90 \\
\cmidrule(l){2-6}
  & FedFit         & 0.70M    & \underline{21.41} & \textbf{5.03} & \textbf{63.76} \\
  & FedFit (RTN4)  & 0.19M    & \textbf{80.60}    & \textbf{5.03} & 63.55 \\
\midrule
\multirow{10}{*}{\text{Qwen2.5-7B-IT}}
  & FedIT~\cite{zhang2023fedgpt}          & 20.22M & 1.00            & \textbf{4.52} & 67.56 \\
  & Flex-LoRA~\cite{bai2024flexlora}      & 20.22M & 1.00            & \textbf{4.52} & 67.47 \\
  & FFA-LoRA~\cite{sun2024ffalora}       & 11.14M & 1.81            & 4.63          & 67.26 \\
  & RoLoRA~\cite{chen2024rolora}         & 10.11M & 2.00            & 4.58          & 67.42 \\
  & Quantized-LoRA~\cite{zhu2024quantlora} & 5.12M  & 3.95            & 4.53          & 67.45 \\
  & LoRA-FAIR~\cite{bian2025lorafair}      & 20.22M & 1.00            & \textbf{4.52} & 67.69 \\
  & LA-LoRA~\cite{liu2026rethinkinglora}        & 20.22M & 1.00            & 4.73          & 67.00 \\
\cmidrule(l){2-6}
  & FedFit        & 0.70M & \underline{28.89} & \textbf{4.52} & \underline{67.71} \\
  & FedFit (RTN4) & 0.19M & \textbf{106.42}   & \textbf{4.52} & \textbf{68.01} \\
\bottomrule
\end{tabular}
\caption{Comparison of FL methods with LoRA rank \(r=4\) with 4 clients and 20 FL rounds for Qwen2.5-{0.5B/1.5B/3B/7B}-Instruct models. Communication cost denotes the number of bytes per client per direction (downlink/uplink) assuming BF16 representation (2 bytes/weight). Compression ratio is based on FedIT with full precision. RTN4 stands for 4-bit round-to-nearest quantization. Proposed method achieves slightly better performance with a large compression ratio of \textbf{20-100$\times$} depending on the model size compared to SOTA methods.}
\label{tab:communication_cost_And_Bench_score}
\end{table*}

\begin{table*}[t!]
\centering
\small

\label{tab:peft-comparison}
\renewcommand{\arraystretch}{1.0}
\begin{tabular}{l l  c c c c c c c c}
\toprule
Model & Method  & BoolQ & PIQA  & HellaS. & WinoG. & ARC-e & ARC-c & OBQA & Average Score\\
\midrule

\multirow{9}{*}{\textbf{Qwen2.5-0.5B-IT}}
& FedIT~\cite{zhang2023fedgpt}       & 67.92 & 69.97 & 40.00 & 56.43 & 65.15 & 29.10 & 24.80 & 50.48 \\

& FFA-LoRA~\cite{sun2024ffalora}    & 67.89 & 70.35 & 40.03 & 55.96 & 65.28 & 29.35 & 24.60 & 50.49\\

& Flex-LoRA~\cite{bai2024flexlora}   & 68.50 & 69.97 & 39.82 & 55.88 & 64.60 & 28.50 & 24.00 & 50.18 \\

& RoLoRA~\cite{chen2024rolora}          & 68.01 & 70.89 & 40.25 & 56.83 & 65.66 & 30.03 & 23.80 & \textbf{50.78} \\

& Quantized-LoRA~\cite{zhu2024quantlora}  & 67.83 & 70.40 & 40.06 & 56.51 & 65.40 & 29.61 & 24.60 & 50.63 \\

& LoRA-FAIR~\cite{bian2025lorafair}       & 68.20 & 70.46 & 40.00 & 56.04 & 65.61 & 29.10 & 24.60 & 50.57 \\

& LA-LoRA~\cite{liu2026rethinkinglora}         & 68.13 & 70.46 & 40.51 & 56.04 & 65.87 & 30.03 & 23.60 & 50.66 \\

\cmidrule(l){2-10}

& \textbf{FedFit (RTN4)} & 67.25 & 70.29 & 39.62 & 55.56 & 65.49 & 29.35 & 25.00 & 50.37 \\

& \textbf{FedFit}  & 67.92 & 70.29 & 39.66 & 56.12 & 66.20 & 29.86 & 24.80 & \underline{50.69} \\

\midrule

\multirow{9}{*}{\textbf{Qwen2.5-1.5B-IT}}
& FedIT~\cite{zhang2023fedgpt}       & 79.45 & 75.41 & 50.72 & 64.17 & 78.03 & 44.45 & 32.60 & \underline{60.69} \\

& FFA-LoRA~\cite{sun2024ffalora}    & 79.24 & 75.63 & 50.95 & 63.22 & 77.40 & 43.60 & 31.60 & 60.23 \\

& Flex-LoRA~\cite{bai2024flexlora}   & 79.63 & 75.84 & 50.74 & 63.38 & 77.48 & 44.45 & 31.80 & 60.48 \\

& RoLoRA~\cite{chen2024rolora}          & 79.33 & 75.90 & 50.74 & 63.46 & 77.23 & 43.77 & 32.40 & 60.40 \\

& Quantized-LoRA~\cite{zhu2024quantlora}  & 79.57 & 75.84 & 50.80 & 63.22 & 77.57 & 45.14 & 31.20 & 60.48 \\

& LoRA-FAIR~\cite{bian2025lorafair}       & 79.82 & 75.57 & 50.87 & 63.22 & 77.99 & 44.54 & 31.80 & 60.54 \\

& LA-LoRA~\cite{liu2026rethinkinglora}         & 78.13 & 75.95 & 50.81 & 63.38 & 76.30 & 43.09 & 32.00 & 59.95 \\

\cmidrule(l){2-10}

& \textbf{FedFit (RTN4)}  & 80.28 & 75.57 & 50.47 & 63.77 & 78.75 & 44.37 & 31.60 & \underline{60.69} \\

& \textbf{FedFit}  & 80.61 & 75.30 & 50.44 & 65.04 & 77.82 & 44.71 & 32.00 & \textbf{60.84} \\

\midrule

\multirow{9}{*}{\textbf{Qwen2.5-3B-IT}}
& FedIT~\cite{zhang2023fedgpt}       & 82.94 & 78.29 & 55.41 & 69.46 & 79.34 & 47.95 & 32.00 & 63.63 \\

& FFA-LoRA~\cite{sun2024ffalora}    & 81.13 & 77.75 & 56.00 & 70.09 & 78.91 & 46.33 & 32.00 & 63.17 \\

& Flex-LoRA~\cite{bai2024flexlora}   & 82.69 & 77.91 & 55.44 & 69.14 & 79.34 & 47.18 & 31.40 & 63.30 \\

& RoLoRA~\cite{chen2024rolora}          & 81.44 & 78.18 & 55.66 & 69.85 & 79.38 & 46.84 & 32.20 & 63.36 \\

& Quantized-LoRA~\cite{zhu2024quantlora}  & 82.20 & 78.13 & 55.30 & 69.61 & 79.34 & 47.35 & 32.00 & 63.42 \\

& LoRA-FAIR~\cite{bian2025lorafair}       & 82.75 & 78.56 & 55.35 & 69.93 & 79.46 & 47.53 & 32.00 & \underline{63.65} \\

& LA-LoRA~\cite{liu2026rethinkinglora}         & 80.12 & 78.18 & 56.13 & 69.53 & 78.11 & 46.42 & 31.80 & 62.90 \\

\cmidrule(l){2-10}

& \textbf{FedFit (RTN4)}  & 84.13 & 78.02 & 54.60 & 69.38 & 79.80 & 47.10 & 31.80 & 63.55 \\

& \textbf{FedFit }  & 83.27 & 78.29 & 54.73 & 68.59 & 80.51 & 48.12 & 32.80 & \textbf{63.76} \\

\midrule

\multirow{9}{*}{\textbf{Qwen2.5-7B-IT}}
& FedIT~\cite{zhang2023fedgpt}       & 86.85 & 79.60 & 60.66 & 72.22 & 83.80 & 56.40 & 33.40 & 67.56 \\

& FFA-LoRA~\cite{sun2024ffalora}    & 86.73 & 80.52 & 61.31 & 72.38 & 82.37 & 54.10 & 33.40 & 67.26 \\

& Flex-LoRA~\cite{bai2024flexlora}   & 86.91 & 79.76 & 60.70 & 72.22 & 83.71 & 55.97 & 33.00 & 67.47 \\

& RoLoRA~\cite{chen2024rolora}          & 87.22 & 80.09 & 60.71 & 71.90 & 83.38 & 55.03 & 33.60 & 67.42 \\

& Quantized-LoRA~\cite{zhu2024quantlora}  & 86.91 & 79.82 & 60.75 & 71.98 & 83.50 & 55.80 & 33.40 & 67.45 \\

& LoRA-FAIR~\cite{bian2025lorafair}       & 87.03 & 80.20 & 60.59 & 72.69 & 83.75 & 56.57 & 33.00 & 67.69 \\

& LA-LoRA~\cite{liu2026rethinkinglora}         & 86.51 & 80.09 & 61.46 & 71.82 & 81.73 & 53.16 & 34.20 & 67.00 \\

\cmidrule(l){2-10}

& \textbf{FedFit (RTN4)} & 87.25 & 80.47 & 60.60 & 73.56 & 84.60 & 55.97 & 33.60 & \textbf{68.01} \\

& \textbf{FedFit}        & 86.64 & 80.09 & 60.25 & 72.93 & 84.39 & 56.06 & 33.60 & \underline{67.71} \\

\bottomrule
\end{tabular}
\caption{Accuracy comparison of FL methods applied to Qwen2.5-{0.5B/1.5B/3B/7B}-Instruct models across 7 common-sense reasoning benchmarks. Proposed method outperforms all baselines with full precision and achieves comparable performances when 4-bit RTN quantization is applied during weight transmission.}
\label{tab:benchmark_score_per_dataset}
\end{table*}

We compare the proposed method with popular state-of-the-art FL baselines, including FedIT~\cite{zhang2023fedgpt}, Flex-LoRA~\cite{bai2024flexlora}, FFA-LoRA~\cite{sun2024ffalora}, RoLoRA~\cite{chen2024rolora}, Quantized-LoRA~\cite{zhu2024quantlora}, LoRA-FAIR~\cite{bian2025lorafair}, and the non-DP variant of LA-LoRA~\cite{liu2026rethinkinglora} for LLM federated finetuning. In Tab.~\ref{tab:communication_cost_And_Bench_score}, we report performance on common-sense reasoning benchmarks including BoolQ, PIQA, HellaSwag, Winogrande, ARC-easy, ARC-challenge, and OBQA in \cite{hu2023llm}, together with the communication cost in BF16 representation per weight, the corresponding compression ratio, and the evaluation perplexity. Across LLMs of different sizes, the proposed method consistently achieves lower evaluation perplexity and higher average scores, while substantially reducing communication cost. For example, for Qwen2.5-7B-Instruct, our method requires only about \(190\)~KB of communicated parameters, compared to \(20\)~MB for standard LoRA-based methods such as FedIT and Flex-LoRA, yielding a compression ratio of roughly \(100\times\). Detailed benchmark scores for each dataset and method are provided in Tab.~\ref{tab:benchmark_score_per_dataset}.

We further compare the training dynamics of different FL methods. As illustrated in Fig.~\ref{fig:qwen2.5_0.5b_it_fl_methods_eval_loss}, we plot the evaluation perplexity on Qwen2.5-0.5B-Instruct. One can observe that the proposed algorithm consistently outperforms all baselines throughout training. In particular, Flex-LoRA and FedIT plateau after the \(15\)th FL round, whereas the evaluation perplexity of the proposed method continues to decrease.

\begin{figure}[t!]
  \centering
  \begin{tikzpicture}
    \begin{axis}[
      width=0.3\textwidth,    
      height=0.25\textwidth,
      scale only axis,
      xmin=0.5, xmax=20.5,
      ymin=7.25, ymax=8.45,
      xlabel={FL Round},
      ylabel={Eval Perplexity},
      xtick={2,4,...,20},
      ytick={7.4,7.6,7.8,8.0,8.2,8.4},
      grid=both,
      grid style={line width=.1pt, draw=gray!30},
      major grid style={line width=.2pt,draw=gray!50},
      legend style={at={(0.98,0.98)}, anchor=north east, font=\scriptsize, draw=gray},
      tick label style={font=\footnotesize},
      label style={font=\footnotesize},
      clip marker paths=true,
    ]

    \addplot[
      color=brown, mark=x, thick, mark size=3, line width=1.2pt
    ] coordinates {
   (1,8.398146176637388)(2,8.132272900963004)(3,8.010697881219775)(4,7.936621374983175)(5,7.876964948223221)(6,7.826148979389357)(7,7.786502355794833)(8,7.763353526238865)(9,7.74612572536553)(10,7.723522292136398)(11,7.707297933549379)(12,7.697935728099033)(13,7.688269608111742)(14,7.677800996404267)(15,7.676898597950161)(16,7.671552279270188)(17,7.673590101100847)(18,7.672664416556222)
(19,7.671497408248604)(20,7.67175347637381)
    };
    \addlegendentry{FFA-LoRA \cite{sun2024ffalora}}

    \addplot[
      color=blue, mark=o, thick, mark size=1.5, line width=1.2pt
    ] coordinates {
      (1,8.163025388260055)(2,7.85469693564566)(3,7.743544300831668)(4,7.660240694618889)(5,7.6005937981812846)(6,7.560770179577093)(7,7.530043895999796)(8,7.49412147471013)(9,7.475300424831444)(10,7.461708877754723)(11,7.446187170573194)(12,7.436088963248078)(13,7.424963471169663)(14,7.42044186169017)(15,7.415293591062979)(16,7.412285161215014)(17,7.414542253030692)(18,7.4062437100074145)
(19,7.404799437816306)(20,7.406033584442134)
    };
    \addlegendentry{FedIT \cite{zhang2023fedgpt}}

    \addplot[
      color=green!55!black, mark=triangle*, thick, mark size=1.5, line width=1.2pt
    ] coordinates {
      (1,8.127783682955497)(2,7.870156332857316)(3,7.7283741452541594)(4,7.643863139763933)(5,7.5861594668451415)(6,7.543705958735825)(7,7.514240298602056)(8,7.491091779993952)(9,7.472315751240525)(10,7.45342862574891)(11,7.438330248779803)(12,7.426316064774477)(13,7.421558292697527)(14,7.415107959291569)(15,7.409293850508142)(16,7.406505050961289)(17,7.407455138338231)(18,7.40393265664712)
(19,7.4041215392124835)(20,7.403620216626298)
    };
    \addlegendentry{Flex-LoRA \cite{bai2024flexlora}}

    \addplot[
      color=red, mark=square*, thick, mark size=1.5, line width=1.2pt
    ] coordinates {
      (1,7.905470203800139)(2,7.747516505882249)(3,7.6617822838015215)(4,7.593893710913728)(5,7.535176618683347)(6,7.505008821971624)(7,7.470340285770151)(8,7.450777759753898)(9,7.4270685958827825)(10,7.414192243895703)(11,7.388155050330754)(12,7.371563893471238)(13,7.361751178520681)(14,7.360059381107212)(15,7.341967113249446)(16,7.329620124850858)(17,7.31593936032524)(18,7.306564763681226)
(19,7.311858939934302)(20,7.28733642454006)
    };
    \addlegendentry{FedFit}

    \end{axis}
  \end{tikzpicture}
  \caption{Evaluation perplexity of different FL methods for Qwen2.5-0.5B-Instruct. The proposed method consistently outperforms all baselines and continues to improve after the others saturate.}
  \label{fig:qwen2.5_0.5b_it_fl_methods_eval_loss}
\end{figure}

\textbf{Impact of number of clients:} We report the evaluation perplexity of the proposed method and FedIT without quantization on Qwen2.5-0.5B-Instruct for the number of clients varying in \( \{2, 4, 8\} \). One can observe that, for all model sizes, the evaluation perplexity of the proposed method decreases approximately linearly as the number of clients increases, and consistently outperforms the FedIT baseline, whose perplexity reduction is much less pronounced regardless of model size. This observation shows that proposed method is better in terms of scalability, i.e., capable of aggregating information when number of clients is large compared to FedIT.

\begin{table}[t!]
\centering
\small

\renewcommand{\arraystretch}{1.0}
\resizebox{\columnwidth}{!}{%
\begin{tabular}{l l c c c }
\toprule
Model & Method & 2 clients & 4 clients & 8 clients  \\
\midrule
\multirow{3}{*}{\text{Qwen2.5-0.5B-IT}}
  & FedIT            & 7.42  & 7.41   & 7.40 \\
\cmidrule(l){2-5}
  & FedFit       & 7.35  & 7.30  & 7.25 \\
\midrule
\multirow{3}{*}{\text{Qwen2.5-1.5B-IT}}
  & FedIT            & 5.67  & 5.67  & 5.66 \\
\cmidrule(l){2-5}
  & FedFit       & 5.61 & 5.58 & 5.54 \\
\midrule
\multirow{3}{*}{\text{Qwen2.5-3B-IT}}
  & FedIT            & 5.10  & 5.10 & 5.09 \\
\cmidrule(l){2-5}
  & FedFit       & 5.04 & 5.04 & 5.03 \\
\bottomrule
\end{tabular}
}
\caption{Evaluation perplexity of the proposed method compared with the FedIT baseline as a function of the number of clients for different model sizes.}
\label{tab:communication_cost}
\end{table}

\textbf{Impact of vector bank quantization:} We fix the quantization group size to \(64\) and vary the numbers of quantization bits for uplink and downlink, denoted by \(b_{\text{up}}\) and \(b_{\text{down}}\), respectively. The evaluation perplexity under different combinations of quantization bits is shown in Fig.~\ref{fig:qwen2.5_0.5b_it_fl_quant_eval_loss}. The quantization effect is minor when both uplink and downlink use at least \(4\) bits. Even when the uplink is quantized to \(2\) bits, the degradation remains acceptable as long as the downlink uses at least \(4\) bits. However, the evaluation perplexity increases significantly in all configurations with \(2\)-bit downlink quantization. This indicates that the impact of quantization is asymmetric between uplink and downlink, which supports our design choice of decoupling the two directions. Intuitively, downlink broadcast communication is more sensitive because the broadcast vector bank serves as a new anchor for all clients: if the received vector bank deviates significantly from the aggregated one, clients may struggle to continue effective local training, potentially leading to divergence.

In Fig.~\ref{fig:qwen2.5_0.5b_it_fl_u2_eval_loss}, we further illustrate the evaluation perplexity of the proposed method under an extreme \(2\)-bit quantization for uplink transmission. One can observe that the training curves with \(4\)-bit and \(8\)-bit downlink quantization follow similar trajectories from start to end. In contrast, for the configuration with \(2\)-bit quantization on both uplink and downlink, the convergence speed is substantially reduced despite the potential compression ratio of \(8\). Based on these observations, we conclude that a combination of \(4\)-bit RTN quantization for both uplink and downlink is a promising setting for the proposed method, yielding negligible performance degradation while achieving a compression ratio of approximately \(4\times\). This finding further justifies our use of \(4\)-bit RTN in Tab.~\ref{tab:benchmark_score_per_dataset}.
\begin{figure}[t!]
    \centering
    \begin{subfigure}[t]{0.8\linewidth}
        \centering
        \begin{tikzpicture}
    \begin{axis}[
      width=0.85\textwidth,    
      height=0.6\textwidth,
      scale only axis,
      xmin=1.5, xmax=8.5,
      ymin=7.3, ymax=7.75,
      xlabel={Bits Quant of Uplink},
      ylabel={Eval Perplexity},
      xtick={2,4,8},
      ytick={7.35, 7.45,7.55,7.65,7.75},
      grid=both,
      grid style={line width=.1pt, draw=gray!30},
      major grid style={line width=.2pt,draw=gray!50},
      legend style={at={(0.98,0.98)}, anchor=north east, font=\footnotesize, draw=gray},
      tick label style={font=\footnotesize},
      label style={font=\footnotesize},
      clip marker paths=true,
    ]

    \addplot[
      color=brown, mark=x, thick, mark size=3, line width=1.2pt
    ] coordinates {
   (2,7.725842843999727)(4, 7.487533096739514)(8, 7.457278679453329)
    };
    \addlegendentry{2 Bits Quant of Downlink}

    \addplot[
      color=blue, mark=o, thick, mark size=1.5, line width=1.2pt
    ] coordinates {
      (2,7.382241509705612)(4, 7.351037475650238)(8, 7.3441572670226964)
    };
    \addlegendentry{4 Bits Quant of Downlink}

    \addplot[
      color=green!55!black, mark=triangle*, thick, mark size=1.5, line width=1.2pt
    ] coordinates {
      (2,7.3907827551304734)(4, 7.344257949266392)(8, 7.333035569786315)
    };
    \addlegendentry{8 Bits Quant of Downlink}

    \addplot[dashed,
      color=red, thick, mark size=1.5, line width=1.2pt
    ] coordinates {
      (2,7.338027007099737)(4, 7.338027007099737)(8, 7.338027007099737)
    };
    \addlegendentry{Full Precision on Downlink}

    \end{axis}
  \end{tikzpicture}
        \caption{Evaluation perplexity vs.\ different combinations of quantization bits 
        for uplink and downlink transmission for Qwen2.5-0.5B-Instruct.}
        \label{fig:qwen2.5_0.5b_it_fl_quant_eval_loss}
    \end{subfigure}
    
    \vspace{0.5em}
    
    \begin{subfigure}[t]{0.8\linewidth}
        \centering
        \begin{tikzpicture}
    \begin{axis}[
      width=0.85\textwidth,    
      height=0.6\textwidth,
      scale only axis,
      xmin=0.5, xmax=20.5,
      ymin=7.25, ymax=8.4,
      xlabel={FL Round},
      ylabel={Eval Perplexity},
      xtick={2,4,...,20},
      ytick={7.4,7.6,7.8,8.0,8.2},
      grid=both,
      grid style={line width=.1pt, draw=gray!30},
      major grid style={line width=.2pt,draw=gray!50},
      legend style={at={(0.98,0.98)}, anchor=north east, font=\scriptsize, draw=gray},
      tick label style={font=\footnotesize},
      label style={font=\footnotesize},
      clip marker paths=true,
    ]

    \addplot[
      color=brown, mark=x, thick, mark size=3, line width=1.2pt
    ] coordinates {
   (1,8.157838422621076)(2,7.816467223732661)(3,7.8116662247314235)(4,7.751511077674579)(5,7.7257396935578315)(6,7.751666319986695)(7,7.7220364000997)(8,7.727808491445167)(9,7.74495123671032)
(10,7.720094308180356)(11,7.7349291664546636)(12,7.741268264836665)(13,7.688271441138276)(14,7.6816753759574175)(15,7.725842843999727)(16,7.665931868332004)(17,7.6659994935518725)(18,7.710293743356807)(19,7.704393293883051)(20,7.688522569904733)
    };
    \addlegendentry{FedFit (U2D2)}

    \addplot[
      color=black, thick, mark size=1.5, line width=1.2pt
    ] coordinates {
      (1,8.163025388260055)(2,7.85469693564566)(3,7.743544300831668)(4,7.660240694618889)(5,7.6005937981812846)(6,7.560770179577093)(7,7.530043895999796)(8,7.49412147471013)(9,7.475300424831444)(10,7.461708877754723)(11,7.446187170573194)(12,7.436088963248078)(13,7.424963471169663)(14,7.42044186169017)(15,7.415293591062979)(16,7.412285161215014)(17,7.414542253030692)(18,7.4062437100074145)
(19,7.404799437816306)(20,7.406033584442134)
    };
    \addlegendentry{FedIT \cite{zhang2023fedgpt}}
    \addplot[
      color=blue, mark=o, thick, mark size=1.5, line width=1.2pt
    ] coordinates {
      (1,8.32793722306874)(2,7.811219250392772)(3,7.736941396694678)(4,7.6526395731696555)(5,7.6031692608921775)(6,7.565483718034074)(7,7.520776866307417)(8,7.508532846816145)(9,7.482796762162923)(10,7.477500045513124)(11,7.440446258999152)(12,7.424591728135396)(13,7.4140296190107)(14,7.391767834629651)(15,7.382241509705612)(16,7.373340958120437)(17,7.3567278178236055)(18,7.348791831186917)
(19,7.347313217259668)(20,7.337563398422807)
    };
    \addlegendentry{FedFit (U2D4)}
    \addplot[
      color=green!55!black, mark=triangle*, thick, mark size=1.5, line width=1.2pt
    ] coordinates {
      (1,8.294277148646753)(2,7.803045995374514)(3,7.728656066357413)(4,7.653606635938595)(5,7.604285988770862)(6,7.581966294063258)(7,7.544774378824775)(8,7.497234618385312)(9,7.4838297913330445)(10,7.458574920408553)(11,7.446314993944583)(12,7.4305294497045375)(13,7.415180443525486)(14,7.400780617893037)(15,7.3907827551304734)(16,7.385527399356659)(17,7.37062104547918)(18,7.362861412476798)
(19,7.355363347473604)(20,7.344745621058045)
    };
    \addlegendentry{FedFit (U2D8)}

    \addplot[
      color=red, dashed, thick, mark size=1.5, line width=1.2pt
    ] coordinates {
      (1,7.904316783635656)(2,7.751666319986695)(3,7.666666639177811)(4,7.5938466374005715)(5,7.541323243567012)(6,7.507990443767264)(7,7.4885328570002905)(8,7.457515151056161)(9,7.4312327987713)(10,7.416354433195473)(11,7.384868871958333)(12,7.373144071304031)(13,7.357781158571585)(14,7.345158897905117)(15,7.338027007099737)(16,7.330911654143518)(17,7.31927688285435)(18,7.309773399284979)
(19,7.303349700864343)(20,7.295873102163013)
    };
    \addlegendentry{FedFit (Full Precision)}

    \end{axis}
  \end{tikzpicture}
        \caption{Evaluation perplexity vs.\ different combinations of quantization bits for 
        uplink and downlink transmission for Qwen2.5-0.5B-Instruct. ``UxDy'' denotes a combination of $x$ bits for uplink and $y$ bits for downlink.}
        \label{fig:qwen2.5_0.5b_it_fl_u2_eval_loss}
    \end{subfigure}
    
\caption{Evaluation perplexity under different quantization settings for Qwen2.5-0.5B-Instruct. Using 4-bit RTN quantization for both uplink and downlink, which achieves a compression ratio of approximately \(4\times\), results in nearly negligible performance degradation.}
\label{fig:qwen2.5_0.5b_it_fl_quant_combined_tb}
\end{figure}

\begin{table}[t!]
\centering
\small
\setlength{\tabcolsep}{4pt}     
\renewcommand{\arraystretch}{1.0}
\resizebox{\columnwidth}{!}{%
\begin{tabular}{lccc}
\toprule
Scheduling [Both/A(B)] & Communication Cost & Eval Perplexity & Average Score \\
\midrule
\makecell{10}  & 1.05M & 7.25 & 49.73 \\
\makecell{11}  & 0.70M & 7.30 & \textbf{50.69} \\
\makecell{12}  & 0.63M & 7.32 & 50.44 \\
\bottomrule
\end{tabular}
}
\caption{Non-alternating and our alternating scheduling. ``\texttt{xy}'' denotes the number of rounds for the
A+B phase and A/B solo phases respectively.}
\label{tab:scheduler_comparison}
\end{table}

\begin{figure}[t!]
    \centering
    \begin{subfigure}[t]{0.8\linewidth}
        \centering
        \begin{tikzpicture}
    \begin{axis}[
      width=0.85\textwidth,    
      height=0.6\textwidth,
      scale only axis,
      xmin=0.8, xmax=3.2,
      ymin=7.32, ymax=7.425,
      xlabel={Number of Solo Phases},
      ylabel={Eval Perplexity},
      xtick={1,2,3},
      ytick={7.34, 7.36,7.38,7.40,7.42},
      grid=both,
      grid style={line width=.1pt, draw=gray!30},
      major grid style={line width=.2pt,draw=gray!50},
      legend style={at={(0.5,0.75)}, anchor=north east, font=\footnotesize, draw=gray},
      tick label style={font=\footnotesize},
      label style={font=\footnotesize},
      clip marker paths=true,
    ]

    \addplot[
      color=brown, mark=x, thick, mark size=3, line width=1.2pt
    ] coordinates {
   (1,7.410617087124132)(2, 7.415592379578154)(3, 7.4190620364549)
    };
    \addlegendentry{No Both Phase}

    \addplot[
      color=blue, mark=o, thick, mark size=1.5, line width=1.2pt
    ] coordinates {
      (1,7.338027007099737)(2, 7.361825773974477)(3, 7.375041083501319)
    };
    \addlegendentry{One Both Phase}

    \addplot[
      color=green!55!black, mark=triangle*, thick, mark size=1.5, line width=1.2pt
    ] coordinates {
      (1,7.328268544606114)(2, 7.336136892076437)(3, 7.348641153120115)
    };
    \addlegendentry{Two Both Phases}
    \end{axis}
  \end{tikzpicture}
        \caption{Evaluation perplexity vs.\ different scheduling of trainable vector banks
        for Qwen2.5-0.5B-Instruct.}
        \label{fig:qwen2.5_0.5b_it_fl_scheduler_eval_loss}
    \end{subfigure}
    
    \vspace{0.6em}
    
    \begin{subfigure}[t]{0.8\linewidth}
        \centering
        \begin{tikzpicture}
    \begin{axis}[
      width=0.85\textwidth,    
      height=0.6\textwidth,
      scale only axis,
      xmin=0.5, xmax=20.5,
      ymin=7.2, ymax=8.6,
      xlabel={FL Round},
      ylabel={Eval Perplexity},
      xtick={2,4,...,20},
      ytick={7.4,7.6,7.8,8.0,8.2,8.4},
      grid=both,
      grid style={line width=.1pt, draw=gray!30},
      major grid style={line width=.2pt,draw=gray!50},
      legend style={at={(0.98,0.98)}, anchor=north east, font=\scriptsize, draw=gray},
      tick label style={font=\footnotesize},
      label style={font=\footnotesize},
      clip marker paths=true,
    ]
\addplot[
      color=green!55!black, mark=triangle*, thick, mark size=1.5, line width=1.2pt
    ] coordinates {
      (1,8.484506774473425)(2,8.209285205505198)(3,8.081470593905355)(4,7.818565908902073)(5,7.759841260174955)(6,7.712485281270323)(7,7.648395066012796)(8,7.581953640302403)(9,7.545619868957231)(10,7.511290220538206)(11,7.500629811493133)(12,7.480909488239886)(13,7.464612787038552)(14,7.4284782475498865)(15,7.4190620364549)(16,7.396039205453232)(17,7.385083679255458)(18,7.3821825478127785)
(19,7.371350358152691)(20,7.364751389347113)
    };
    \addlegendentry{FedFit (03)}

    \addplot[
      color=blue, mark=o, thick, mark size=1.5, line width=1.2pt
    ] coordinates {
      (1,8.489217309123772)(2,8.209412427488408)(3,7.955972245953012)(4,7.826413941577907)(5,7.71751971871052)(6,7.638068159103063)(7,7.594853349977988)(8,7.57042400298582)(9,7.535322138702796)(10,7.498049753586156)(11,7.469699128836133)(12,7.462605552681468)(13,7.438888902138064)(14,7.4282586359798355)(15,7.415592379578154)(16,7.4071407836713155)(17,7.393773643710149)(18,7.383849502841857)
(19,7.383922561749923)(20,7.372447099065646)
    };
    \addlegendentry{FedFit (02)}
    \addplot[
      color=brown, mark=x, thick, mark size=3, line width=1.2pt
    ] coordinates {
   (1,8.507156619063867)(2,8.021886237165528)(3,7.846386521803325)(4,7.716989816878833)(5,7.651887903129148)(6,7.606193507198934)(7,7.5753639017435095)(8,7.535284411020498)(9,7.517173612956579)(10,7.489689889051394)(11,7.467373615849122)(12,7.460431657877669)(13,7.434408442298661)(14,7.419261918319303)(15,7.410617087124132)(16,7.413059246837423)(17,7.393576211289458)(18,7.378767961751195)
(19,7.375404191057255)(20,7.359319780650084)
    };
    \addlegendentry{FedFit (01)}
\addplot[
      color=purple,dash dot, thick, mark size=1.5, line width=1.2pt
    ] coordinates {
      (1,7.926721688389626)(2,7.730948669391228)(3,7.657956279341364)(4,7.60867109404786)(5,7.553379413827814)(6,7.534549657157998)(7,7.5114352792613515)(8,7.484481083355012)(9,7.467911304109663)(10,7.437362015675464)(11,7.418156444507724)(12,7.394623368402493)(13,7.384975394468159)(14,7.3778224333242495)(15,7.375041083501319)(16,7.35290162750385)(17,7.341515508106482)(18,7.3330373181184445)
(19,7.322512050756677)(20,7.318962780307304)
    };
    \addlegendentry{FedFit (13)}
    \addplot[
      color=black, thick, mark size=1.5, line width=1.2pt
    ] coordinates {
      (1,7.941735849248288)(2,7.76385699490259)(3,7.699964035911345)(4,7.603379541270154)(5,7.576696926348101)(6,7.534030522199003)(7,7.491606170105867)(8,7.464418802000378)(9,7.4407496090297025)(10,7.434651278914894)(11,7.415797472154025)(12,7.397166073958145)(13,7.377743278301508)(14,7.366385429554232)(15,7.361825773974477)(16,7.348093657084292)(17,7.345003040769762)(18,7.333579321172599)
(19,7.321546674324349)(20,7.324146323353665)
    };
    \addlegendentry{FedFit (12)}
    \addplot[
      color=red, dashed, thick, mark size=1.5, line width=1.2pt
    ] coordinates {
      (1,7.904316783635656)(2,7.751666319986695)(3,7.666666639177811)(4,7.5938466374005715)(5,7.541323243567012)(6,7.507990443767264)(7,7.4885328570002905)(8,7.457515151056161)(9,7.4312327987713)
(10,7.416354433195473)(11,7.384868871958333)(12,7.373144071304031)(13,7.357781158571585)(14,7.345158897905117)(15,7.338027007099737)(16,7.330911654143518)(17,7.31927688285435)(18,7.309773399284979)
(19,7.303349700864343)(20,7.295873102163013)
    };
    \addlegendentry{FedFit (11)}

    \end{axis}
  \end{tikzpicture}
        \caption{Evaluation perplexity of different training-part schedulers for
        Qwen2.5-0.5B-Instruct. ``\texttt{xy}'' denotes the number of rounds for the
        A+B phase and A/B solo phases respectively; phase transitions are
        A+B $\rightarrow$ B $\rightarrow$ A.}
        \label{fig:qwen2.5_0.5b_it_fl_scheduler_rd_eval_loss}
    \end{subfigure}
    
    \caption{Comparison of scheduler designs for Qwen2.5-0.5B-Instruct under different
    training-phase schedules. }
    \label{fig:qwen2.5_0.5b_it_fl_scheduler_combined_tb}
\end{figure}

\textbf{Impact of training part scheduler:}
Here we investigate how different scheduling strategies for the trainable vector banks influence training. We first compare the proposed alternating scheduler with a non-alternating variant in which both vector banks remain trainable throughout training, as reported in Tab.~IV. Although the non-alternating approach achieves slightly lower evaluation perplexity than the alternating schedule with one-round both phase and one-round solo phase, its average score on the common-sense reasoning benchmarks is lower, suggesting potential overfitting.

As illustrated in Fig.~\ref{fig:qwen2.5_0.5b_it_fl_scheduler_eval_loss}, the evaluation perplexity decreases as the number of rounds in the both phase increases. The largest improvement comes from introducing a single both phase; additional both-phase rounds yield diminishing returns, which aligns with our design principle of using this phase primarily as a repair/synchronization stage. Meanwhile, the evaluation perplexity increases as the number of solo-phase rounds grows. This behavior is expected, since the vector banks for A and B become progressively more mismatched when one is kept fixed while the other is continually updated.

To further examine the optimal scheduler with at most one both phase, we compare the training trajectories of different schedules for Qwen2.5-0.5B-Instruct in Fig.~\ref{fig:qwen2.5_0.5b_it_fl_scheduler_rd_eval_loss}. One can observe that schedules including one-round both phase consistently outperform those without, confirming the benefit of a repair stage. Moreover, the schedule with one round of both phase and one round of solo phase achieves the best training curve among all tested schedulers with low communication cost. Taken together with the results in Tab.~IV, these findings indicate that the alternating schedule with one round of both phase and one round of solo phase is an effective choice, balancing benchmark performance, communication cost, and training stability. The choice $m_B = m_s = 1$ is structural rather than purely empirical:
$m_B \geq 1$ is required for the bilinear $A$/$B$ coupling to enter the gradient signal at all, as evidenced by the plateau of the ``No Both Phase'' curve in
Fig.~\ref{fig:qwen2.5_0.5b_it_fl_scheduler_eval_loss};
$m_s \geq 1$ ensures SoP-exact aggregation (Sec.~IV.B); and
the minimal configuration satisfying both is the Pareto-optimal point in Tab.~\ref{tab:scheduler_comparison}. We retain a fixed cyclic policy to inherit the convergence behaviour of cyclic block-coordinate descent and to keep the per-round communication budget deterministic; adaptive regime-aware scheduling is left to future work.

\textbf{Per-phase adapter evolution:} We track the two banks over a $T = 12$-round ($4$ meta-cycle) FedFit run on Qwen2.5-1.5B-Instruct. Fig.~\ref{fig:adapter_staircase} plots their norms and round-to-round changes. Three observations confirm the structural argument of Sec.~IV.B: in solo rounds the frozen bank's change is exactly zero, so its aggregation is exact SoP; in joint rounds both banks move, so the SoP-vs-PoS bias arises and the RSC is invoked only there; the solo-round in-phase change is of the same order as the joint-round change, so each solo round is a full-strength training step, not a starved one.

\begin{figure}[t]
\centering
\begin{tikzpicture}
\begin{axis}[
  name=leftax,
  width=0.36\columnwidth, height=0.28\columnwidth,
  scale only axis,
  xmin=0.5, xmax=12.5, ymin=7.6, ymax=13.0,
  xlabel={FL Round}, ylabel={$\|\mathbf v\|_2$},
  xtick={1,2,3,4,5,6,7,8,9,10,11,12},
  ytick={8,9,10,11,12,13},
  grid=both, grid style={line width=.1pt, draw=gray!30},
  major grid style={line width=.2pt, draw=gray!50},
  legend style={at={(0.02,0.98)}, anchor=north west, font=\scriptsize, draw=gray},
  tick label style={font=\scriptsize},
  label style={font=\footnotesize},
  clip marker paths=true,
]
\addplot[color=brown, mark=x, thick, mark size=2.5, line width=1.0pt] coordinates {
  (1,8.311)(2,8.311)(3,9.226)(4,9.894)(5,9.894)(6,10.550)(7,11.053)(8,11.053)(9,11.595)(10,12.198)(11,12.198)(12,12.692)
};
\addlegendentry{$\|\mathbf v^A\|_2$}
\addplot[color=blue, mark=o, thick, mark size=1.5, line width=1.0pt] coordinates {
  (1,8.082)(2,8.907)(3,8.907)(4,9.516)(5,10.083)(6,10.083)(7,10.564)(8,11.074)(9,11.074)(10,11.543)(11,12.044)(12,12.044)
};
\addlegendentry{$\|\mathbf v^B\|_2$}
\end{axis}

\begin{axis}[
  at={(leftax.east)}, anchor=west, xshift=28pt,
  width=0.36\columnwidth, height=0.28\columnwidth,
  scale only axis,
  xmin=1.5, xmax=12.5, ymin=-0.2, ymax=3.2,
  xlabel={FL Round}, ylabel={$\|\Delta\mathbf v\|_2$},
  xtick={2,3,4,5,6,7,8,9,10,11,12},
  ytick={0,1,2,3},
  grid=both, grid style={line width=.1pt, draw=gray!30},
  major grid style={line width=.2pt, draw=gray!50},
  legend style={at={(0.5,0.5)}, anchor=center, font=\scriptsize, draw=gray},
  tick label style={font=\scriptsize},
  label style={font=\footnotesize},
  clip marker paths=true,
]
\addplot[color=brown, mark=x, thick, mark size=2.5, line width=1.0pt] coordinates {
  (2,0.000)(3,2.838)(4,2.292)(5,0.000)(6,2.350)(7,2.174)(8,0.000)(9,2.324)(10,2.204)(11,0.000)(12,2.198)
};
\addlegendentry{$\|\Delta\mathbf v^A\|_2$}
\addplot[color=blue, mark=o, thick, mark size=1.5, line width=1.0pt] coordinates {
  (2,2.399)(3,0.000)(4,2.156)(5,2.070)(6,0.000)(7,2.036)(8,2.050)(9,0.000)(10,2.087)(11,2.034)(12,0.000)
};
\addlegendentry{$\|\Delta\mathbf v^B\|_2$}
\end{axis}
\end{tikzpicture}
\caption{Per-round bank norms (left) and round-to-round changes (right).}
\label{fig:adapter_staircase}
\end{figure}

{\textbf{System-level overhead and wall-clock breakdown:}
Tab.~\ref{tab:wallclock} breaks down the per-round wall-clock over $100$ SFT steps per client on an $8\!\times\!$H200 node: local SFT dominates, and FedFit is slightly faster than FedIT, $23.8$ vs.\ $24.5$~s/round, because its optimizer carries a single vector bank rather than per-layer $A/B$ adapters; its server-side rank-$1$ RSC and blockwise RTN add only a few milliseconds, and the QR$+$SVD aggregator scales sub-quadratically in $K$, under $50$~ms even at $K = 256$. Translating bytes to latency, Tab.~\ref{tab:bw} reports per-round uplink time on a $5$~Mbps cellular link: the LoRA-native baselines scale linearly with model size, reaching $\sim\!16$--$32$~s at $7$B, comparable to a full per-round wall-clock, so communication is the binding constraint there. Because its bank dimension is independent of model size, FedFit's uplink is instead constant at $\sim\!1.12$~s, or $\sim\!0.30$~s with $4$-bit RTN, an order of magnitude below its own local-SFT time, so it mitigates the communication bottleneck at negligible compute cost.

\begin{table}[t]
\centering
\small
\setlength{\tabcolsep}{5pt}
\renewcommand{\arraystretch}{1.0}
{\begin{tabular}{lrr}
\toprule
Phase                              & FedIT       & FedFit      \\
\midrule
Local SFT (per client)             & $24.5$~s    & $23.8$~s    \\
Model build $+$ adapter init       & $<\!0.05$~s & $<\!0.05$~s \\
Central eval (averaged)           & $0.5$~s     & $0.5$~s     \\
Server aggregation                 & $<\!1$~ms   & $2.2$~ms    \\
RTN encode/decode                  & ---         & $<\!10$~ms  \\
Other (RPC, transport)             & $7.4$~s     & $7.4$~s     \\
\midrule
\textbf{Total}                     & $32.4$~s    & $31.7$~s    \\
\bottomrule
\end{tabular}}
\caption{Median per-round wall-clock breakdown for FedIT and FedFit over the run.}
\label{tab:wallclock}
\end{table}

\begin{table}[t]
\centering
\small
\setlength{\tabcolsep}{6pt}
\renewcommand{\arraystretch}{1.0}
{\begin{tabular}{l c c c c}
\toprule
Method            & 0.5B    & 1.5B    & 3B      & 7B \\
\midrule
FedIT~\cite{zhang2023fedgpt}             & 7.04    & 14.77   & 23.95   & 32.35 \\
Flex-LoRA~\cite{bai2024flexlora}         & 7.04    & 23.95   & 23.95   & 32.35 \\
FFA-LoRA~\cite{sun2024ffalora}          & 3.52    & 11.97   & 11.97   & 17.82 \\
RoLoRA~\cite{chen2024rolora}            & 3.55    & 7.41    & 12.02   & 16.18 \\
Quantized-LoRA~\cite{zhu2024quantlora}    & 1.86    & 3.81    & 6.14    & 8.19 \\
LoRA-FAIR~\cite{bian2025lorafair}         & 7.09    & 14.83   & 24.02   & 32.35 \\
LA-LoRA~\cite{liu2026rethinkinglora}           & 7.04    & 14.77   & 23.95   & 32.35 \\
\cmidrule(l){1-5}
\textbf{FedFit}            & \textbf{1.12} & \textbf{1.12} & \textbf{1.12} & \textbf{1.12} \\
\textbf{FedFit (RTN4)}     & \textbf{0.30} & \textbf{0.30} & \textbf{0.30} & \textbf{0.30} \\
\bottomrule
\end{tabular}}
\caption{Uplink time, in seconds, at a representative $5$~Mbps cellular link, derived from the communication cost in Tab.~\ref{tab:communication_cost_And_Bench_score}.}
\label{tab:bw}
\end{table}
}

{\textbf{Impact of vector-bank dimension and LoRA rank:} The vector-bank parameterization introduces two independent design knobs: the total bank dimension $L$ and the LoRA rank $r$ at which the per-layer adapters are generated. We sweep each axis with identical FedFit configuration and report results in Tab.~\ref{tab:rank_bank_sweep}. Doubling $L$ from $524$K to $1.05$M improves evaluation perplexity from $5.62$ to $5.61$ and the $7$-task common-sense average from $60.61$ to $60.68$, while halving it hurts symmetrically, so the default $L = 524$K is not at saturation and the bank dimension is a continuous compression/accuracy lever. By contrast, varying LoRA rank $r \in \{2, 4, 8, 16\}$ at fixed $L = 524$K moves both metrics by at most $0.3$--$0.5$~pt, so at a fixed communication budget FedFit users should prefer larger banks over larger ranks.

\begin{table}[t]
\centering
\small
\setlength{\tabcolsep}{4pt}
\renewcommand{\arraystretch}{1.0}
\resizebox{\columnwidth}{!}{%
{\begin{tabular}{lrrrrr}
\toprule
Configuration                            & $L$         & $r$  & Uplink/round   & Eval ppl        & Avg Score        \\
\midrule
FedFit (smaller bank)                    & 262\,K      & 4    & 0.35\,MB       & $5.67$          & $60.50$          \\
FedFit (default)                         & 524\,K      & 4    & 0.70\,MB       & $\textbf{5.62}$ & $60.61$          \\
FedFit (larger bank)                     & 1.05\,M     & 4    & 1.40\,MB       & $\textbf{5.61}$ & $60.68$          \\
\midrule
FedFit (rank $2$)                        & 524\,K      & 2    & 0.70\,MB       & $5.63$          & $\textbf{60.84}$ \\
FedFit (rank $8$)                        & 524\,K      & 8    & 0.70\,MB       & $5.65$          & $60.73$          \\
FedFit (rank $16$)                       & 524\,K      & 16   & 0.70\,MB       & $5.67$          & $60.53$          \\
\bottomrule
\end{tabular}}%
}
\caption{Sweep over vector-bank dimension and LoRA rank.}
\label{tab:rank_bank_sweep}
\end{table}}

{\textbf{Impact of residual spectral correction:} To isolate RSC's contribution, Tab.~\ref{tab:rsc_ablation} compares five aggregators: no RSC, the correction on $\mathbf{v}^A$, $\mathbf{v}^B$, or both banks at gain $\gamma=0.5$, and a ``pure RSC'' variant that drops the FedAvg anchor at $\gamma=1.0$. The both-banks variant attains the best $7$-task average, while the asymmetric $\mathbf{v}^A$/$\mathbf{v}^B$-only variants trail even the no-RSC baseline: the rank-$1$ SVD of $C_\delta = R_A R_B^\top$ represents the SoP-vs-PoS gap as a single bilinear outer product, which one bank alone cannot recover. Dropping the FedAvg anchor likewise regresses perplexity, confirming that a lone rank-$1$ term cannot represent the full first-order bank update. \new{RSC is defined in the bank space and does not carry over to raw LoRA factors; the lower panel reports one-sided correction there, where neither direction improves on plain averaging.}

\begin{table}[t]
\centering
\small
\setlength{\tabcolsep}{4pt}
\renewcommand{\arraystretch}{1.0}
\resizebox{\columnwidth}{!}{%
{\begin{tabular}{l c c}
\toprule
Aggregator (joint round) & Eval ppl & Average Score \\
\midrule
\multicolumn{3}{l}{\new{\emph{Vector bank (FedFit)}}} \\
no RSC (alt-phases $+$ FedAvg ) & $5.62$ & $60.76$ \\
pure RSC (no FedAvg term, $\gamma = 1.0$)     & $5.72$ & $60.65$ \\
RSC applied to $\mathbf{v}^A$ only            & $5.63$ & $60.57$ \\
RSC applied to $\mathbf{v}^B$ only            & $5.64$ & $60.63$ \\
RSC applied to both banks (proposed)         & $5.64$ & $\mathbf{60.87}$ \\
\midrule
\multicolumn{3}{l}{\new{\emph{Raw LoRA factors}}} \\
\new{no correction (FedAvg of $A$, $B$)}       & \new{$5.68$} & \new{$60.69$} \\
\new{compensate $\bar A$ only}                 & \new{$5.71$} & \new{$60.66$} \\
\new{compensate $\bar B$ only}                 & \new{$5.71$} & \new{$60.54$} \\
\bottomrule
\end{tabular}}%
}
\caption{Ablation of the residual spectral correction\new{, and of one-sided correction on the raw LoRA factors}.}
\label{tab:rsc_ablation}
\end{table}

}

\textbf{Generalization beyond common-sense reasoning:} To test a regime qualitatively different from common-sense reasoning, we federally fine-tune on MetaMathQA~\cite{yu2024metamath} and evaluate on GSM8K~\cite{cobbe2021gsm8k} and MMLU~\cite{hendrycks2021mmlu}, reported in Tab.~\ref{tab:math_reasoning_manu}. FedFit tracks FedIT on both benchmarks at both scales while transmitting $13.2$/$21.4\times$ fewer bytes per round, and $4$-bit RTN adds a further $\sim$$4\times$. The compression mechanism is task-agnostic because only the bank travels, so FedFit extends its communication-efficiency advantage to math-specific fine-tuning at essentially no accuracy cost.

\begin{table}[t!]
\centering
\small
\setlength{\tabcolsep}{5pt}
\renewcommand{\arraystretch}{1.0}
{\begin{tabular}{l l c c}
\toprule
Model & Method & GSM8K & MMLU \\
\midrule
\multirow{3}{*}{\text{Qwen2.5-1.5B-IT}}
 & FedIT         & $64.67$ & $60.12$ \\
\cmidrule(l){2-4}
 & FedFit        & $64.67$ & $59.61$ \\
 & FedFit (RTN4) & $63.38$ & $59.63$ \\
\midrule
\multirow{3}{*}{\text{Qwen2.5-3B-IT}}
 & FedIT         & $74.98$ & $65.46$ \\
\cmidrule(l){2-4}
 & FedFit        & $74.75$ & $65.30$ \\
 & FedFit (RTN4) & $73.62$ & $65.28$ \\
\bottomrule
\end{tabular}}
\caption{MetaMathQA fine-tuning on Qwen2.5-1.5B/3B-Instruct; GSM8K and MMLU scores.}
\label{tab:math_reasoning_manu}
\end{table}

\textbf{Non-IID heterogeneity:} To complement the IID partitioning of Sec.~VII, we sweep a Dirichlet partitioner over Dolly's \texttt{category} field at $\alpha \in \{0.05, 0.1, 0.3, 1.0, 3.0\}$; smaller $\alpha$ corresponds to more skewed client compositions. The $7$-task common-sense average lands in a $0.33$-pt band ($60.60$--$60.93$) across the full sweep, against an IID baseline of $60.61$ (Tab.~\ref{tab:noniid_dirichlet_manu}). The flatness follows structurally from FedFit's shared seeded projections, which absorb client-specific drift before aggregation rather than letting it appear as an aggregation artefact; FedFit's non-IID robustness is therefore theoretically grounded and empirically confirmed.

\begin{table}[t!]
\centering
\small
\setlength{\tabcolsep}{5pt}
\renewcommand{\arraystretch}{1.0}
{\begin{tabular}{l c c c}
\toprule
Dirichlet $\alpha$ & Eval ppl & Average Score & Uplink (MB/rd) \\
\midrule
$0.05$                & $5.63$ & $\mathbf{60.93}$ & $0.70$ \\
$0.1$                 & $5.63$ & $60.60$          & $0.70$ \\
$0.3$                 & $5.65$ & $60.80$          & $0.70$ \\
$1.0$                 & $5.64$ & $\mathbf{60.93}$ & $0.70$ \\
$3.0$                 & $5.63$ & $60.64$          & $0.70$ \\
\midrule
IID baseline         & $5.62$ & $60.61$          & $0.70$ \\
\bottomrule
\end{tabular}}
\caption{Dirichlet non-IID sweep over Dolly's \texttt{category} field.}
\label{tab:noniid_dirichlet_manu}
\end{table}

\textbf{Scalability across client counts:} To address the limited $K = 4$ / $T = 20$ baseline of Tab.~\ref{tab:communication_cost_And_Bench_score}, we sweep $K \in \{4, 16, 32\}$ \new{at a common $T = 50$} and report results in Tab.~\ref{tab:scalability_manu}. The communication cost stays at $\sim$$0.7$~MB across all three cells because the bank dimension is independent of $K$, and both eval perplexity and the $7$-task common-sense average remain flat across the $8\times$ scaling\new{, so FedFit remains stable up to the tested full-participation client counts. Partial participation, client sampling and straggler effects are not evaluated here and are left to future work}.

\begin{table}[t!]
\centering
\small
\setlength{\tabcolsep}{5pt}
\renewcommand{\arraystretch}{1.0}
{\begin{tabular}{l c c c}
\toprule
Clients $K$ & Rounds $T$ & Eval ppl & Average Score \\
\midrule
$ 4$ & \new{$50$} & \new{$5.63$} & \new{$60.54$}           \\
$16$            & $50$ & $5.60$ & $\mathbf{60.96}$  \\
$32$            & $50$ & $5.61$ & $60.47$           \\
\bottomrule
\end{tabular}}
\caption{Scalability of FedFit across client counts $K \in \{4, 16, 32\}$.}
\label{tab:scalability_manu}
\end{table}

\new{Tab.~\ref{tab:rsc_spectrum} reports the spectrum of the core matrix $\mathcal{C}_\delta$ behind these cells. The top mode carries $36.3\%$ of the residual energy at $K = 4$ but only $7.9\%$ and $4.1\%$ at $K = 16$ and $K = 32$, so a rank-$1$ correction is faithful only at $K = 4$; Algorithm~\ref{alg:residual_spectral_Agg} returns FedAvg at the larger counts.}

\begin{table}[t!]
\centering
\small
\setlength{\tabcolsep}{5pt}
\renewcommand{\arraystretch}{1.0}
\new{\begin{tabular}{c c c c c}
\toprule
Clients $K$ & Joint rounds & mean $\sigma_1$ & mean $\rho_1$ & Correction \\
\midrule
$ 4$ & $17$ & $4.09$ & $0.364$ & applied  \\
$16$ & $17$ & $1.07$ & $0.079$ & withheld \\
$32$ & $17$ & $0.56$ & $0.041$ & withheld \\
\bottomrule
\end{tabular}}
\caption{\new{Spectrum of the joint-round core matrix and the resulting gate decision, averaged over the joint rounds of each run.}}
\label{tab:rsc_spectrum}
\end{table}

\section{Conclusions}

This work addresses the critical communication and aggregation bottlenecks in Federated LLM fine-tuning, offering three key takeaways for future edge intelligence systems. First, we demonstrate that the high dimensionality of LoRA adapters is largely redundant for LLMs; by using a {disjoint shared vector-bank parameterization}, we show that distinct client models can be effectively constructed from a shared, ultra-compact parameter space without sacrificing expressivity. Second, we resolve the fundamental ``aggregation dilemma'' (Sum-of-Products vs. Product-of-Sums) not only through approximation, but also through a novel {alternating optimization schedule}. This finding suggests that sequential updating of low-rank factors is a viable strategy to achieve accurate aggregation in distributed settings. Finally, the success of {FedFit} proves that extreme compression (reducing communication costs to just $1$--$3\%$ of standard methods) is achievable when structural parameter sharing is coupled with post-training quantization. These results constitute important first steps to contribute to deploy billion-parameter models over highly constrained wireless networks. Other issues not addressed in this paper would need to be investigated separately. These include a deeper analysis of the LLM model's effect on the results, the study of scenarios with client heterogeneity and partial participation, accounting for transmission errors and delays, and the acquisition of new data.

\appendices
\section{Proof to Theorem \ref{lem:1}}

We first show the proof regarding $\mathbf{v}^B$ with fixed $\mathbf{v}^A$. It is trivial to prove the remaining part following the same procedure as below. In the following, we denote $\bar{d}_A:=\sum_{\ell=1}^L rd_\ell^\mathrm{in}$, $\bar{d}_B:=\sum_{\ell=1}^L rd_\ell^\mathrm{out}$, and $\bar{d}_W:=\sum_{\ell=1}^L d_\ell^\mathrm{in}d_\ell^\mathrm{out}$ as the amount of adjustable parameters corresponding to all $A_\ell$s, $B_\ell$s, and $W_\ell$s, respectively. Below, we only show the proof sketch due to the space limits.

\textit{Step 1: Writing the mapping $\mathbf{v}^B\mapsto \Delta w$ as a linear operator:}
We first denote
\begin{equation}
D_A := \mathrm{diag}\big(A_1\otimes I_{d^\textrm{out}_1},\dots,A_L\otimes I_{d^\textrm{out}_L}\big)\in \mathbb R^{\bar{d}_W\times \bar{d}_B},
\end{equation}
where $\otimes$ stands for the Kronecker product.
Recalling the updating procedure in \eqref{eq:lora_update}, we can verify that
\begin{equation}
\Delta \mathcal{W} = \overline{P}^B(\mathbf{v}^A)\mathbf{v}^B,\qquad
\overline{P}^B(\mathbf{v}^A) := \frac{\alpha}{r} D_A P^B \in \mathbb R^{\bar{d}_W\times d_B},
\end{equation}
which is a linear map from the low-dimensional parameter space $\mathbb R^{d_B}$ to the full stacked update space $\mathbb R^{d_W}$. In the following, for simplification, we may omit the variable $\mathbf{v}^A$ in $\overline{P}^B(\mathbf{v}^A)$.

\noindent\textit{Step 2: Estimating the operator norm of $\overline{P}^B$:}
We first note from the definition that in $P^B\in\mathbb R^{\bar{d}_B\times d_B}$, each row is a randomly generated one-hot vector. Thus, its operator norm can be represented by
\begin{equation}
\|P^B\|_\mathrm{op}=\max_{x\in \mathbb{R}^{d_B}}\frac{\|P^B\|_2}{\|x\|_2}=\max_{j\in d_B} \left(\sum_{i=1}^{\bar{d}_B}[P^B]_{i,j}\right)^{1/2}.
\end{equation}
From the definition of $P^B$, we know
\begin{equation}
\sum_{i=1}^{\bar{d}_B}[P^B]_{i,j} = 1, \textrm{ thus, }\|P^B\|_\mathrm{op}= 1.
\end{equation}

Since we always have $\bar{d}_B \gg d_B$ in LLMs, and $\bar{d}_B$ is always sufficiently large, from the scaling law, we can assert
\begin{equation}
\|A_\ell\|^2_{\mathrm{op}}\le\|A_\ell\|_2^2\approx r d^\textrm{in}_\ell / \bar{d}_A.
\end{equation}
Thus,
\begin{equation}
\|D_A\|_{\mathrm{op}}=\max_\ell\|A_\ell\|_{\mathrm{op}}\le \sqrt{r (\max_\ell d^\textrm{in}_\ell) / \bar{d}_A}.
\end{equation}

Combining the above estimates, we know
\begin{equation}\label{eq:L_norm_Bound}
\begin{aligned}
&\|\overline{P}^B\|^2_{\mathrm{op}}
= \Big\|\frac{\alpha}{r} D_A P^B\Big\|^2_{\mathrm{op}}\\
&\le \frac{\alpha^2}{r^2}\,\|D_A\|^2_{\mathrm{op}}\,\|P^B\|^2_{\mathrm{op}}
\le \frac{\alpha^2(\max_\ell d^\textrm{in}_\ell)}{r\bar{d}_A}\|\mathbf{v}^A\|^2_2.
\end{aligned}
\end{equation}

\noindent\textit{Step 3: Smoothness of $g=f\circ \overline{P}^B$ on $\mathbf{v}^B$:}
Define $g(\mathbf{v}^B)=f(\overline{P}^B\mathbf{v}^B)$ for $\mathbf{v}^B\in\mathbb R^{k_B}$. 
For any $\mathbf{v}^B_1,\mathbf{v}^B_2\in\mathbb R^{d_B}$,
\begin{equation}
\begin{aligned}
&\|\nabla g(\mathbf{v}^B_1) - \nabla g(\mathbf{v}^B_2)\|_2\\
&= \big\|(\overline{P}^B)^\top\big(\nabla f(\overline{P}^B\mathbf{v}^B_1) - \nabla f(\overline{P}^B\mathbf{v}^B_2)\big)\big\|_2 \\
&\le \|(\overline{P}^B)^\top\|_{\mathrm{op}}\,\beta\,\|\overline{P}^B\mathbf{v}^B_1 - \overline{P}^B\mathbf{v}^B_2\|_2 \\
&\le \beta\|\overline{P}^B\|_\mathrm{op}^2\,\|\mathbf{v}^B_1 - \mathbf{v}^B_2\|_2.\label{eq:beta_Bound}
\end{aligned}
\end{equation}
Thus $g$ is $\beta_B$-smooth with
$
\beta_B = \beta\|\overline{P}^B\|_{\mathrm{op}}^2$.

Bringing \eqref{eq:L_norm_Bound} into \eqref{eq:beta_Bound}, we can easily obtain the smoothness result in Theorem \ref{lem:1} regarding matrix $B$. And that regarding matrix $A$ can be similarly obtained by the symmetry.

{\noindent\textit{Step 4: Mixed Smoothness: }
By the chain rule,
\begin{equation}
\nabla_{\mathbf v^B} f(\mathbf v^A,\mathbf v^B)
=
\left[\overline{P}^B(\mathbf v^A)\right]^\top
\nabla f(\Delta\mathcal W(\mathbf v^A,\mathbf v^B)).
\end{equation}
Therefore,
\begin{equation}
\begin{aligned}
&\nabla_{\mathbf v^B} f(\mathbf v^{A\prime},\mathbf v^B)
-
\nabla_{\mathbf v^B} f(\mathbf v^A,\mathbf v^B)\\
=&
\left[\overline{P}^B(\mathbf v^{A\prime})-\overline{P}^B(\mathbf v^A)\right]^\top
\nabla f(\Delta\mathcal W(\mathbf v^{A\prime},\mathbf v^B))\\
&+
\overline{P}^B(\mathbf v^A)^\top
\left[
\nabla f(\Delta\mathcal W(\mathbf v^{A\prime},\mathbf v^B))
-
\nabla f(\Delta\mathcal W(\mathbf v^A,\mathbf v^B))
\right].
\end{aligned}
\end{equation}
Since the LoRA mapping is bilinear, $\overline{P}^B(\mathbf{v}^A)$ is linear in $\mathbf v^A$. Together with $\|P^A\|_{\rm op}=\|P^B\|_{\rm op}=1$, this gives
\begin{equation}
\|\overline{P}^B(\mathbf{v}^{A\prime})-\overline{P}^B(\mathbf{v}^A))\|_{\rm op}
\lesssim
\frac{\alpha}{r}
\|\mathbf v^{A\prime}-\mathbf v^A\|_2.
\end{equation}
Using the bounded-gradient assumption and the $\beta$-smoothness of the full-space objective, we obtain
\begin{equation}
\begin{aligned}
&\|\nabla_{\mathbf v^B} f(\mathbf v^{A\prime},\mathbf v^B)
-
\nabla_{\mathbf v^B} f(\mathbf v^A,\mathbf v^B)\|_2\\
\le\;&
\frac{\alpha G}{r}
\|\mathbf v^{A\prime}-\mathbf v^A\|_2\\
&+
\beta
\|\overline{P}^B(\mathbf v^A)\|_{\rm op}
\|\overline{P}^A(\mathbf v^B)\|_{\rm op}
\|\mathbf v^{A\prime}-\mathbf v^A\|_2\\
\lesssim\;&
\left(
\frac{\alpha G}{r}
+
\sqrt{\beta_A\beta_B}
\right)
\|\mathbf v^{A\prime}-\mathbf v^A\|_2,
\end{aligned}
\end{equation}
where $\overline{P}^A(\mathbf v^B)$ has a symmetric definition and norm bound with $\overline{P}^B(\mathbf v^A)$.
Thus,
\begin{equation}
\|\nabla_{\mathbf v^B} f(\mathbf v^{A\prime},\mathbf v^B)
-
\nabla_{\mathbf v^B} f(\mathbf v^A,\mathbf v^B)\|_2
\le
\beta_{\rm mix}
\|\mathbf v^{A\prime}-\mathbf v^A\|_2,
\end{equation}
where
\([)
\beta_{\rm mix}
\lesssim
\frac{\alpha G}{r}
+
\sqrt{\beta_A\beta_B}.
\)
The proof of the other mixed-smoothness direction is symmetric. This completes the proof.
}

\section{Proof to Theorem \ref{the:2}}\label{proof:2}
{
Denote
}
\begin{equation}\begin{aligned}
&\mathbf g_t^A=
\nabla_{\mathbf v^A}f(\mathbf v_t^A,\mathbf v_t^B),
\quad
\mathbf g_t^B=
\nabla_{\mathbf v^B}f(\mathbf v_t^A,\mathbf v_t^B),\\
&\widetilde{\mathbf g}_t^B=
\nabla_{\mathbf v^B}f(\mathbf v_{t+1}^A,\mathbf v_t^B).
\end{aligned}
\end{equation}
Using the block-wise smoothness along
$(\mathbf v_t^A,\mathbf v_t^B)\rightarrow
(\mathbf v_{t+1}^A,\mathbf v_t^B)\rightarrow
(\mathbf v_{t+1}^A,\mathbf v_{t+1}^B)$, we have
\begin{equation}\footnotesize
\begin{aligned}
&f(\mathbf v^A_{t+1},\mathbf v^B_{t+1})
-
f(\mathbf v^A_t,\mathbf v^B_t)\\
\le\;&
-\eta_t\|\mathbf g_t^A\|_2^2
+
\frac{\beta_A\eta_t^2}{2}\|\mathbf g_t^A\|_2^2
-
\eta_t\langle \widetilde{\mathbf g}_t^B,\mathbf g_t^B\rangle
+
\frac{\beta_B\eta_t^2}{2}\|\mathbf g_t^B\|_2^2 .
\end{aligned}
\end{equation}
{
We decompose the inner product as
}
\begin{equation}
{
-\eta_t\langle \widetilde{\mathbf g}_t^B,\mathbf g_t^B\rangle
=
-\eta_t\|\mathbf g_t^B\|_2^2
-
\eta_t
\langle \widetilde{\mathbf g}_t^B-\mathbf g_t^B,\mathbf g_t^B\rangle .
}
\end{equation}
{
By the mixed-smoothness in Theorem \ref{lem:1},
}
\begin{equation}
{
\|\widetilde{\mathbf g}_t^B-\mathbf g_t^B\|_2
\le
\beta_{\rm mix}
\|\mathbf v_{t+1}^A-\mathbf v_t^A\|_2
=
\beta_{\rm mix}\eta_t\|\mathbf g_t^A\|_2 .
}
\end{equation}
{
Thus,
}
\begin{equation}
{
\begin{aligned}
-\eta_t
\langle \widetilde{\mathbf g}_t^B-\mathbf g_t^B,\mathbf g_t^B\rangle
&\le
\eta_t
\|\widetilde{\mathbf g}_t^B-\mathbf g_t^B\|_2
\|\mathbf g_t^B\|_2\\
&\le
\beta_{\rm mix}\eta_t^2
\|\mathbf g_t^A\|_2
\|\mathbf g_t^B\|_2\\
&\le
\frac{\beta_{\rm mix}\eta_t^2}{2}\|\mathbf g_t^A\|_2^2
+
\frac{\beta_{\rm mix}\eta_t^2}{2}\|\mathbf g_t^B\|_2^2 .
\end{aligned}
}
\end{equation}
{
Substituting it back gives
}
\begin{equation}
{
\begin{aligned}
&f(\mathbf v^A_{t+1},\mathbf v^B_{t+1})
-
f(\mathbf v^A_t,\mathbf v^B_t)\\
\le\;&
-\eta_t\left(1-\frac{(\beta_A+\beta_{\rm mix})\eta_t}{2}\right)
\|\mathbf g_t^A\|_2^2\\
&-
\eta_t\left(1-\frac{(\beta_B+\beta_{\rm mix})\eta_t}{2}\right)
\|\mathbf g_t^B\|_2^2\\
\le\;&
-c_t^{\rm AB}
\|\nabla f(\mathbf v_t^A,\mathbf v_t^B)\|_2^2 ,
\end{aligned}
}
\end{equation}
{
where $c_t^{\rm AB}$ is defined in the theorem. When $\eta_t$ is sufficiently small, $c_t^{\rm AB}>0$. Summing the above inequality from $t=0$ to $T-1$ yields
}
\begin{equation}\footnotesize
{
\sum_{t=0}^{T-1}
c_t^{\rm AB}
\|\nabla f(\mathbf v_t^A,\mathbf v_t^B)\|_2^2
\le
f(\mathbf v_0^A,\mathbf v_0^B)
-
f(\mathbf v_T^A,\mathbf v_T^B)
\le
f(\mathbf v_0^A,\mathbf v_0^B)-f_\star .
}
\end{equation}
{
Therefore,
}
\begin{equation}
{
\min_{0\le t<T}
\|\nabla f(\mathbf v_t^A,\mathbf v_t^B)\|_2^2
\le
\frac{
f(\mathbf v_0^A,\mathbf v_0^B)-f_\star
}{
\sum_{t=0}^{T-1} c_t^{\rm AB}
}.
}
\end{equation}
{
For a constant learning rate
$0<\eta<2/(\max\{\beta_A,\beta_B\}+\beta_{\rm mix})$,
we have
}
\begin{equation}
{
c_t^{\rm AB}
=
\eta\left(1-\frac{(\max\{\beta_A,\beta_B\}+\beta_{\rm mix})\eta}{2}\right),
}
\end{equation}
{
which gives the rate $\mathcal O(1/T)$. Hence the A+B phase converges to a stationary point in the sense that the gradient norm vanishes along the generated sequence. If the iterates converge to $(\mathbf v^A,\mathbf v^B)$, then by continuity of the gradient,
$\nabla f(\mathbf v^A,\mathbf v^B)=0$.
This completes the proof.
}
\printbibliography

\end{document}